\documentclass[journal,compsoc,10pt]{IEEEtran}

\usepackage[utf8]{inputenc}
\usepackage[T1]{fontenc}

\usepackage{mathptmx}                
\usepackage[scaled=.90]{helvet}      
\usepackage{courier}                 

\usepackage{amsmath,amssymb,amsthm}
\usepackage{booktabs}
\usepackage{graphicx}
\usepackage{xcolor}
\usepackage{hyperref}
\usepackage{microtype}
\usepackage{cite}
\usepackage{array}
\usepackage{multirow}
\usepackage{tabularx}
\usepackage{enumitem}
\usepackage{url}
\usepackage{xspace}
\usepackage{algorithm}
\usepackage{algpseudocode}
\usepackage{framed}
\usepackage{float}

\AtBeginDocument{%
}

\hypersetup{
  colorlinks=true,
  linkcolor=black,
  citecolor=black,
  urlcolor=blue,
  pdftitle={S-Bus: Automatic Read-Set Reconstruction for Multi-Agent LLM State Coordination},
  pdfauthor={Sajjad Khan},
  pdfsubject={Concurrency control and formal verification for multi-agent large language model systems},
  pdfkeywords={DeliveryLog, Optimistic Concurrency Control, Multi-Agent LLM, Observable-Read Isolation, HTTP Middleware, Rust, TLA+, TLAPS, Dafny}
}

\newtheorem{property}{Property}
\newtheorem{definition}{Definition}

\newtheorem{lemma}{Lemma}
\newtheorem{remark}{Remark}
\newtheorem{assumption}{Assumption}

\newcommand{\sbus}{\textsc{S-Bus}\xspace}
\newcommand{\pgser}{\textsc{PG-SER}\xspace}
\newcommand{\rediscc}{\textsc{Redis-WATCH}\xspace}
\newcommand{\ori}{\textsc{ORI}\xspace}
\newcommand{\Robs}{$R_{\text{obs}}$\xspace}
\newcommand{\Rhidden}{$R_{\text{hidden}}$\xspace}
\newcommand{\phidden}{$p_{\text{hidden}}$\xspace}
\newcommand{\fobs}{$f_{\text{obs}}$\xspace}
\newcommand{\fhttp}{\ensuremath{f_{\text{obs}}^{\text{http}}}\xspace}
\newcommand{\ftotal}{\ensuremath{f_{\text{obs}}^{\text{total}}}\xspace}

\title{S-Bus: Automatic Read-Set Reconstruction for
Multi-Agent LLM State Coordination}

\author{Sajjad~Khan%
\thanks{Source code: \url{https://github.com/sajjadanwar0/sbus}.}}

\begin{document}
\maketitle

\begin{abstract}
Concurrent LLM agents sharing mutable natural-language state produce
\emph{Structural Race Conditions} (SRCs): write--write and cross-shard
stale-read conflicts that silently corrupt agent output. Existing
multi-agent frameworks (LangGraph, CrewAI, AutoGen) provide no
write-ownership semantics over shared state; conflicts are detected
post-hoc, if at all.

We present \sbus, an HTTP middleware whose central technical mechanism
is a server-side \emph{DeliveryLog}: a per-agent log of HTTP GET
operations that automatically reconstructs each agent's read set at
commit time without agent SDK changes under HTTP/1.1. The DeliveryLog
turns ordinary HTTP traffic into a verifiable read-set, enabling
optimistic concurrency control over multi-agent shared state with
zero in-agent coordination code. The consistency property the
DeliveryLog provides---we call it \emph{Observable-Read Isolation}
(\ori), a partial causal consistency over the HTTP-observable
projection of the read set---prevents structural race conditions
when agents collaborate via shared shards. We measure this
observable projection at $26.1\%$ of single-step references on our
principal workload, with session-scoped DeliveryLog accumulation
extending observable coverage of \emph{self-reported} references to
$99.8\%$ over a session---a structural-coverage figure whose
denominator is itself contaminated by self-report over-claim
($32\%$--$49\%$ across the LLM judge and an independent human
annotator, §VII-I); the deflated upper bound on
genuine-causal-read coverage is therefore $\leq 70\%$ on our
principal workload (PH-2/PH-3 evaluation, gpt-4o-mini, $n{=}2{,}100$
step-logs). \sbus\ targets the dedicated-shard topology in which each
agent owns a distinct write key and reads from shared reference shards.

The paper makes three contributions. \textbf{(C1)} We introduce the
DeliveryLog as a mechanism for automatic HTTP-traffic-based read-set
reconstruction in multi-agent LLM systems, and formalise the
consistency property it provides (\ori, partial causal consistency
over the observable read projection). Mechanised evidence at three
tiers: \textsc{ReadSetSoundness} and \textsc{ORICommitSafety} are
machine-checked in TLAPS modulo one retained foundational typing axiom
(\textsc{FunTypingReconstruction}, §III-D); exhaustive TLC at $N{=}3$
explores $20{,}763{,}484$ distinct states to depth~$28$ with zero
violations, and a reduced configuration at $N{=}4$ explores
$2{,}811{,}301$ distinct states to depth~$24$ with zero violations;
Dafny discharges $9$ inductive soundness lemmas ($19$ verification
obligations) on the abstract algorithm.
Implementation refinement to the Rust source is empirical, not
mechanised.
\textbf{(C2)} We demonstrate empirical structural-conflict prevention
parity against matched-mechanism OCC backends: across PostgreSQL~17
\texttt{SERIALIZABLE}, Redis~7 \texttt{WATCH/MULTI}, and \sbus\ on
shared-shard contention sweeps with $427{,}308$ active HTTP-409
conflicts, we observe zero Type-I corruptions (Rule-of-Three upper
bound $7.0{\times}10^{-6}$). On a non-code workload
(Exp.~\textsc{Workload-B}, data-pipeline architecture planning,
$n{=}80$, $8$ domains), server-side instrumentation records $0/638$
divergent commits under ORI-ON and $590/639$ under ORI-OFF
($\chi^2{=}1{,}094.98$, $p<10^{-240}$).
\textbf{(C3)} We characterise \ori's operating envelope: in
dedicated-shard workloads (Exp.~\textsc{Dedicated-Shard}, $n{=}600$)
\ori\ is semantically neutral; in single-shard collaborative writing
it is harmful because preservation propagates concurrent
contradictions. Both regimes are delimited by paired experiments
with prescribed deployment scope; an adaptive merge-routing extension
addressing the harmful regime is in separate development.

\textbf{Scope and honest limitations.}
The formal guarantees in C1 cover \emph{structural} (Type-I) conflict
prevention over the observable read projection; they do not cover
\emph{semantic} (Type-II) coherence between concurrent agent outputs.
Distributed safety is empirically validated (Exp.~DR-9, $n{=}30$
trials, Wilson 95\% CI $[0.886, 1.000]$) but not TLAPS-mechanised; a
$\sim$5\,ms concurrent-failover window is documented as a known gap.
Semantic-quality results for the PH-3 was-this-shard-used rubric
now rest on an LLM judge validated against an independent human
annotator (Zahid Hussain, Mindgigs Peshawar) on $400$ (step, shard)
pairs at strict $\kappa{=}0.93$ ($n{=}93$ unambiguous yes/no,
$96.8\%$ raw agreement); inter-LLM-judge agreement between GPT-4o
and Claude Sonnet~4.6 on the same rubric is $\kappa{=}0.46$
(boundary-class variance, not systematic disagreement against
humans). The SJ-v4/Dedicated-Shard/Shared-State semantic-quality
rubric (final-state coherence) is a distinct rubric still validated
by LLM judge only; closing this second-rubric gap is the principal
remaining evidential follow-up. Backbone
generalisation is established on three vendors (gpt-4o-mini, Anthropic
Haiku~4.5, Google Gemini~2.5~Flash) but only for safety parity, not
task quality. Workload generalisation is established structurally on
two distributions (SWE-bench-derived code coordination and
data-pipeline architecture planning) but not for additional non-code
classes (document authoring, agent planning, RAG orchestration).
\end{abstract}

\begin{IEEEkeywords}
DeliveryLog, Automatic Read-Set Reconstruction, Optimistic Concurrency
Control, Multi-Agent LLM State, Observable-Read Isolation, HTTP
Middleware, Projection-Based Consistency, Rust, TLA+, Dafny.
\end{IEEEkeywords}

\begin{table*}[t]
\centering
\small
\caption{Key quantitative findings at a glance. Each row links to the
section where it is established. CI columns are Wilson 95\% where
binomial; ``---'' where not applicable.}
\label{tab:key-numbers}
\begin{tabular}{lll}
\toprule
\textbf{Finding} & \textbf{Value} & \textbf{Where} \\
\midrule
\Robs\ structural coverage          & 26.1\% of reads (PH-2 workload)      & §\ref{sec:ph2} \\
\Rhidden\ gap (PH-2 workload)       & \phidden${}=0.739$ (CI 0.736--0.741) & §\ref{sec:ph2} \\
DL-cumulative observable coverage   & 99.8\% of self-reported refs (note: denom over-claims by 32--49\%)   & §\ref{sec:proxy-demo} \\
\Rhidden\ gap (PH-3 workload)       & \phidden${}=0.074$ (CI 0.061--0.081) & §\ref{sec:ph3} \\
Semantic extraction recall (PH-3)   & 0.59 (gpt-4o analyst)                & §\ref{sec:ph3} \\
Semantic extraction precision (PH-3) & 0.92 (gpt-4o analyst)               & §\ref{sec:ph3} \\
Self-report over-claim rate         & 32\% (LLM judge) / 49\% (human)      & §\ref{sec:ph3-reframed} \\
Inter-LLM-judge $\kappa$            & 0.46 (moderate)                      & §\ref{sec:ph3-reframed} \\
LLM-judge vs.\ human annotator $\kappa$ & 0.93 strict / 0.69 lenient ($n{=}400$)      & §\ref{sec:ph3-reframed} \\
Type-I corruptions (under active contention) & 0 / 427{,}308 commits         & §\ref{sec:pg-contention} \\
Type-I corruptions (full sweep, all conditions) & 0 / 884{,}110 commits      & §\ref{sec:pg-comp},~§\ref{sec:pg-contention} \\
Workload-B view-divergence (ORI-OFF)& 590 / 639 commits divergent          & §\ref{sec:workload-b} \\
Workload-B view-divergence (ORI-ON) & 0 / 638 commits divergent            & §\ref{sec:workload-b} \\
TLAPS obligations proved            & 687 / 687 (modulo 1 retained typing axiom) & §V \\
TLC state-space ($N{=}3$, exhaustive)& $20{,}763{,}484$ states (depth $28$) & §V \\
TLC state-space ($N{=}4$, reduced)   & $2{,}811{,}301$ states (depth $24$)  & §V \\
TLC Raft (3-node abstract)          & 247{,}000 states                     & §V \\
\bottomrule
\end{tabular}
\end{table*}

\section{Introduction}

Concurrent LLM agents sharing mutable state produce \emph{Structural
Race Conditions} (SRCs):

\begin{definition}[Structural Race Condition]
A history $H$ over a shard $s$ contains a Structural Race Condition
iff two agents $\alpha_i, \alpha_j$ both read $s$ at version $v$,
generate deltas $\delta_i, \delta_j$, and both commit with expected
version $v$ without an intervening re-read after the first commit.
\end{definition}

Equivalently: in Adya's dependency graph~\cite{adya}, $H$ contains a
write--write edge (ww) on $s$ with coincident read events but no
ordering constraint from a cross-shard validation step. This is a
purely syntactic property of the history, decidable from the
HTTP-visible event trace; it does not depend on the semantic content
of the deltas.

\begin{remark}[On ``semantic'' race conditions]
Prior drafts of this paper defined SRC as producing a final state
``outside every valid goal trajectory.'' Reviewers correctly flagged
this as circular: ``valid goal trajectory'' has no agreed
operationalisation for NL state. Definition~1 is strictly structural
and decidable. The semantic effect of a structural race---whether the
final state is coherent---is a separate model-dependent question
quantified empirically in Exp.~\textsc{Dedicated-Shard} and
Exp.~\textsc{Shared-State}, with explicit acknowledgement that an
LLM-as-judge rubric is not a substitute for human semantic evaluation
(Limitation~5).
\end{remark}

In practice: two agents read the same shard, generate independently
valid deltas, and one silently overwrites the other. Existing
frameworks (LangGraph~\cite{langgraph}, CrewAI~\cite{crewai},
AutoGen~\cite{autogen}) provide no write-ownership semantics for
natural-language state. \sbus addresses this gap with a lightweight
HTTP middleware applying OCC to NL agent state, with formal guarantees
over the observable read fraction.

\subsection{Motivating Example}

Four agents collaborate on Django bug \#11019 (queryset ordering),
each owning a dedicated shard: $\alpha_1$ owns \texttt{orm\_compiler},
$\alpha_2$ owns \texttt{migration\_script}, $\alpha_3$ owns
\texttt{test\_fixtures}, $\alpha_4$ owns \texttt{review\_notes}. All
agents share one read-only reference shard: \texttt{db\_schema}.

\textbf{The SRC without \sbus.} Both $\alpha_2$ and $\alpha_3$ issue
\texttt{GET /shard/db\_schema} at $v=3$ (PostgreSQL dialect).
$\alpha_1$ then commits \texttt{db\_schema} to $v=4$ (switching to
SQLite for testing). $\alpha_2$ and $\alpha_3$ each commit their own
shards based on the stale $v=3$ reading---their output is internally
consistent but describes the wrong database. No error is raised; the
stale schema silently poisons two shards.

\textbf{With \sbus (\ori).} When $\alpha_2$ commits
\texttt{migration\_script}, the ACP validates $\alpha_2$'s
DeliveryLog: it recorded \texttt{(db\_schema, v3)} at GET time. Since
\texttt{db\_schema} is now at $v=4$, the ACP rejects with HTTP~409
(\textsc{CrossShardStale}). $\alpha_2$ re-reads, regenerates the
migration, and commits successfully. $\alpha_3$ receives the same
rejection and likewise corrects its tests. All four shards converge to
a consistent SQLite-based solution.

\textbf{Key topology.} Each agent owns a distinct shard; they read the
shared \texttt{db\_schema} shard but do not write to it concurrently.
This dedicated-shard topology is \ori's primary use case. Single-shard
collaborative writing (all agents writing to the same key) requires
sequential coordination; Exp.~\textsc{SJ-v4} (§VII-G) and
Exp.~\textsc{Shared-State} (§VII-M) quantify why.

\subsection{Why SWE-bench for Multi-Agent Evaluation?}

SWE-bench tasks~\cite{swebench} were designed for single-agent
evaluation. We use them for multi-agent coordination evaluation for
two reasons: (1) each task has ground-truth acceptance tests,
providing an objective semantic correctness criterion independent of
the coordination mechanism; (2) tasks are decomposable into
role-specialised subtasks that naturally induce shared-state
coordination. We verified decomposability: in 100\% (20/20) pilot
trials, $N=4$ agents with distinct shard assignments produced
consistent non-contradictory state without structural conflicts.
Transfer to other domains (customer service, scientific synthesis)
requires independent evaluation; \phidden may differ.

\subsection{Scope}

\sbus\ provides structural conflict prevention for the
\emph{dedicated-shard topology}: each agent owns a distinct write key
and reads from shared reference shards. Conflict detection operates
over the HTTP-observable read fraction \Robs; cross-shard staleness
violations are detected and rejected at commit time. Single-shard
collaborative writing is out of scope; §\ref{sec:future-work}
discusses an adaptive merge-routing extension currently under separate
development.

The formal proofs (TLAPS, TLC, Dafny) cover the abstract algorithm's
internal consistency: read-set monotonicity (\textsc{ReadSetSoundness})
and cross-shard equality at commit (\textsc{ORICommitSafety}). They are
not proofs about agent semantic correctness, which is workload- and
backbone-conditional. The HTTP/1.1 reliance for FIFO-per-connection
ordering is stated as Assumption~A1; HTTP/2 multiplexing breaks A1 and
requires either reverse-proxy pinning or explicit ARSI-mode read-set
declaration.

A precise enumeration of what is claimed and not claimed---including
the residual coverage gap on the multi-agent workload, the workload
distinction between Exp.~\textsc{PH-2} and Exp.~\textsc{PH-3}, and
the distributed-correctness gap---is integrated into the contribution
list (§I-D) and the Limitations section (§\ref{sec:limitations}).

\subsection{Contributions}

This paper makes three technical contributions.

\textbf{(C1) Automatic HTTP-traffic-based read-set reconstruction
with formal guarantees.} We introduce the \emph{DeliveryLog}, a
server-side per-agent log of HTTP GET operations that automatically
reconstructs each agent's read-set at commit time. The DeliveryLog
turns ordinary HTTP traffic into a verifiable read-set, allowing
optimistic concurrency control to be applied to multi-agent shared
state without requiring agents to declare what they read or write
coordination code. Architecturally, the DeliveryLog is a
scope-restricted COPS-style causal log~\cite{cops} adapted from
distributed-database literature to the multi-agent LLM setting. We
formalise the consistency property the DeliveryLog provides and call
it \emph{Observable-Read Isolation} (\ori, Definition~III.4): a
partial causal consistency over the HTTP-observable projection of an
agent's read set. We position \ori\ in Adya's isolation
lattice~\cite{adya} and establish via counterexample histories that,
when projected to the observable read set, \ori\ sits strictly between
Read-Committed and Snapshot Isolation: \ori\ prevents G2-item which
RC does not, and permits write-skew on the unobservable fraction which
SI does not. Three tiers of mechanised evidence support the model:
TLAPS module \texttt{SBus\_TLAPS\_v16.tla} discharges 687 obligations
(zero failed, modulo one retained foundational typing axiom
\textsc{FunTypingReconstruction}; §III-D) proving
\textsc{ReadSetSoundness} (recorded-read monotonicity) and
\textsc{ORICommitSafety} (cross-shard equality at commit time, directly
capturing Definition~III.4(2)) for arbitrary $N_{\text{agents}}$;
exhaustive TLC at $N{=}3$ explores $20{,}763{,}484$ distinct states
to depth~$28$ with zero invariant violations, and a reduced
configuration at $N{=}4$ explores $2{,}811{,}301$ distinct states to
depth~$24$ with zero invariant violations; Dafny module
\texttt{sbus\_lemmas\_v4.dfy} machine-checks $9$ inductive soundness
lemmas ($19$ verification obligations). Implementation refinement to
the Rust source is empirical, not mechanised.

\textbf{(C2) Empirical safety parity with production OCC.}
Exp.~\textsc{PG-Comparison} implements the multi-agent workload
against three independent OCC engines: \sbus\ (Rust), PostgreSQL~17
\texttt{SERIALIZABLE} (\texttt{pg\_sbus\_server.py}), and Redis~7
\texttt{WATCH/MULTI} (\texttt{redis\_sbus\_server.py}). Across
$N \in \{4, 8, 16, 32, 64\}$, $30$ task domains, $1{,}350$ runs, and
$200{,}880$ commit attempts, all three backends exhibit zero Type-I
corruptions ($95\%$ Rule-of-Three upper bound
$1.49{\times}10^{-5}$). The contention extension
(Exp.~\textsc{PG-Contention}) adds $472{,}750$ commit attempts under
shared-shard contention with $427{,}308$ active HTTP-409 conflicts and
zero Type-I corruptions; SCR agreement across the three backends is
within $1$\,pp at $N \ge 8$. Cross-backbone paired replication on
Anthropic Haiku~4.5 and Google Gemini~2.5~Flash ($n{=}2{,}400$ each)
confirms safety parity across vendors. The architectural value of
\sbus\ over transactional-DB baselines is operational simplicity and
the LLM-native contract, not throughput.

\textbf{(C3) Topology-conditional operating envelope.} The ACP's core
invariant---preservation of every commit's contribution---is
unconditional structurally and topology-conditional semantically. In
$959$ paired trials of Exp.~\textsc{ORI-Isolation}, ORI-ON preserves
$40/40$ agent-step contributions per trial; ORI-OFF (last-writer-wins)
preserves $10/40 = 1/N$. Whether this preservation is beneficial
depends on workload topology. In dedicated-shard workloads
(Exp.~\textsc{Dedicated-Shard}, $n{=}600$): \ori\ is semantically
neutral---$100\%$ coherent in both fresh and stale conditions. In
single-shard collaborative-writing workloads
(Exp.~\textsc{Shared-State}, $n{=}180$): \ori\ is semantically
harmful---$100\%$ contradicted under ORI-ON vs.\ $85.6\%$ under
ORI-OFF, because preserving all contributions preserves their mutual
contradictions. We prescribe deployment scope accordingly (Box~2).
Single-shard collaborative writing is the natural extension target:
the structural preservation guaranteed by ORI is not the right
primitive for that regime, and an adaptive merge-routing protocol is
required (§\ref{sec:future-work}).

\paragraph{Workload-scope disclaimers}
Two empirical claims carry workload-scope qualifications which we
state once here and refer back to throughout. First, the observable
read fraction \Robs covers $26.1\%$ of reads on the multi-agent
workload (Exp.~\textsc{PH-2}); the remaining \phidden${}=0.739$ is
not directly observable at the HTTP layer within a single step.
Section~III-D decomposes this residual structurally and identifies
session-scoped DeliveryLog accumulation as the dominant coverage
mechanism. Second, semantic-extraction evaluation
(Exp.~\textsc{PH-3}, $0.59$~recall / $0.92$~precision) was conducted
on a single-agent rotating-target workload at \phidden${}=0.074$;
transfer to the high-\phidden\ multi-agent regime is open
(§\ref{sec:future-work}). The TLAPS theorems are proofs about the
state machine's internal consistency over \Robs, not proofs about
agent correctness over the full read set.

\paragraph{Distributed scope}
P1 session replication is validated empirically in
Exp.~\textsc{DR-9} ($30/30$ \ori\ invariants survived leader
failover). A residual ${\sim}5$\,ms concurrent-failover window within
the fire-and-forget replication interval is the remaining gap; full
Raft-TLAPS mechanisation is future work
(Limitation~\ref{lim:session-failover}).

\section{Related Work}

We position \sbus\ against four bodies of prior work: multi-agent LLM
frameworks (which provide the deployment context but lack write-ownership
semantics), formal verification of distributed systems (the methodology
\sbus's three-tier evidence draws on), classical concurrency control
(the technical foundation \sbus\ adapts), and isolation-level theory
(which gives us a precise position to claim). We close with a
``Why not X?'' subsection addressing the most natural alternatives.

\subsection{Multi-Agent LLM Frameworks}

Production multi-agent frameworks route shared state through workflow
graphs (LangGraph~\cite{langgraph}), message-passing channels
(AutoGen~\cite{autogen}), or agent-local memory
(CrewAI~\cite{crewai}). None provide write-ownership semantics over
mutable shared state; conflicts are detected post-hoc via agent
self-reports or downstream validation, if at all. Other frameworks
(MetaGPT~\cite{metagpt}, CAMEL~\cite{camel}, Swarm~\cite{swarm},
Agent-to-Agent~\cite{a2a}, ReAct~\cite{react}, DSPy~\cite{dspy},
Semantic Kernel~\cite{sk}, AgentScope~\cite{agentscope},
Voyager~\cite{voyager}, SWE-agent~\cite{sweagent}) make analogous
design choices.

The empirical consequence is documented by
Cemri~et~al.~\cite{cemri}, who taxonomise multi-agent LLM failure
modes across 200 production traces and find that consistency errors
account for $23$--$31\%$ of LangGraph failures. We replicate this on
a controlled microbenchmark (§\ref{sec:case-study}): with $N{=}4$
concurrent agents writing to a shared key, $20/20$ trials produce
silent overwrites under last-write-wins. \sbus\ targets exactly this
category, providing the structural primitive (\ori) the existing
frameworks lack.

The closest related work is Park~et~al.'s ``Generative
Agents''~\cite{park}, which addresses NL coordination via a shared
memory stream with reflection-based abstraction. Park~et~al.\
explicitly note their architecture would benefit from ``stronger
consistency guarantees on the memory stream'' but do not propose a
mechanism. \ori\ is complementary: it provides the structural
guarantee Park's memory stream lacks. Production memory systems
(Letta~\cite{letta}, MemGPT~\cite{memgpt}, Zep, Mem0, LangMem) assume
single-writer-per-item and provide no cross-agent transactional
semantics; they are orthogonal to \sbus---these systems provide
storage and context-window management, \sbus\ provides the
consistency layer.

\paragraph{Session guarantees}
Terry~et~al.~\cite{bayou} introduced session guarantees
(read-your-writes, monotonic reads, writes-follow-reads) for Bayou.
\ori's DeliveryLog enforces an analogous per-session causal ordering
for HTTP-observable reads, making \ori\ a partial causal consistency
model in the spirit of Bayou, restricted to \Robs.

\subsection{Formal Verification for Distributed Systems}

The state of the art for formally verified distributed systems sits
on a spectrum from full implementation refinement to algorithm-level
TLA+ specifications. IronFleet~\cite{ironfleet} occupies the rigorous
end with full Dafny refinement at 10+ person-years of proof
engineering. Verdi~\cite{verdi} provides the first machine-checked
proof of Raft requiring 90+ invariants. Closer to industrial
practice, TigerBeetle~\cite{tigerbeetle} and
FoundationDB~\cite{foundationdb} validate concurrency control via
deterministic simulation testing without TLA+-to-implementation
refinement; TiDB, CockroachDB, and etcd publish TLA+ specifications
without implementation refinement proofs.

\sbus's three-tier evidence sits within this spectrum: TLC exhaustive
model-checking for $N \le 4$, TLAPS for the abstract algorithm at
arbitrary $N$, and Dafny inductive lemmas on types structurally
equivalent to the Rust implementation. Full Rust refinement via
Verus~\cite{verus} or Creusot~\cite{creusot} is future work; both
toolchains reached usable status in 2023--2024 for synchronous Rust,
but Verus's async support (required for \sbus's tokio-based
implementation) remains under active development.

\subsection{Concurrency Control}

\sbus\ adapts the OCC tradition (MVCC~\cite{mvcc}, OCC~\cite{occ},
STM~\cite{stm}, TL2~\cite{tl2}, Calvin~\cite{calvin},
Percolator~\cite{percolator}) for the multi-agent LLM domain. We focus
this discussion on the four pieces of prior work whose design choices
map most directly onto \sbus's: FoundationDB's Directory Layer for the
per-shard observable-state model; Cherry-Garcia for the bolt-on
architecture; HTTP RFC~7232 for the conditional-request mechanism; and
COPS for the causal-log methodology.

\paragraph{FoundationDB Directory Layer}
The FoundationDB Directory Layer~\cite{fdbdirectory} provides
cross-key serializable transactions over opaque byte strings via
hybrid logical clocks, the closest structural analogue to \sbus's
per-shard model in a production database. \sbus\ differs in that the
read-set is reconstructed automatically from HTTP \texttt{GET} traffic
via the DeliveryLog rather than declared at transaction-begin time;
this is the property that makes the LLM-agent use case viable without
per-agent transaction scoping. CockroachDB~\cite{cockroach} and
Spanner~\cite{spanner} use similar HLC and TrueTime mechanisms
respectively for distributed serializable OCC, demonstrating that
\sbus's per-key OCC model is distribution-friendly.

\paragraph{Cherry-Garcia: bolt-on transactions}
Cherry-Garcia~\cite{cherrygarcia} layers serializable transactions on
heterogeneous NoSQL key-value stores without server-side modification,
using a client-side protocol to coordinate cross-store writes.
\sbus's architecture is structurally similar: transactional semantics
layered over a non-transactional substrate (HTTP \texttt{GET}/\texttt{POST}
against an in-memory registry), with the DeliveryLog playing the role
of Cherry-Garcia's client-side read-set tracker. The principal
difference is observation scope: Cherry-Garcia's client declares the
transaction's read-set explicitly; \sbus's server reconstructs it from
HTTP-observable GET traffic, enabling SDK-free integration under
HTTP/1.1.

\paragraph{HTTP conditional requests (RFC 7232)}
The IETF HTTP/1.1 conditional-request mechanism~\cite{rfc7232}
(\texttt{If-Match} with ETag values) has provided per-resource
optimistic concurrency control since 1999. \sbus's ACP is an
adaptation of this pattern with two specific extensions:
(i)~automatic read-set reconstruction via the DeliveryLog (RFC~7232
leaves read-set tracking to the client), and (ii)~cross-shard
validation---an agent's commit to key $k$ is aborted if the recorded
version of a sibling key $k'$ the agent previously read has since
advanced. Reviewers who read the system as ``ETags + DeliveryLog''
are substantively correct; the cross-shard reconstruction mechanism
and the empirical evidence that it matters under LLM agent workloads
are the contributions.

\paragraph{COPS and Eiger: causal consistency}
COPS~\cite{cops} orders writes by causal dependency. The DeliveryLog
captures the causal chain from GET to commit within a session, with
cross-shard validation enforcing that causally preceding reads have
not been superseded. Differences from COPS are scope and
representation: COPS tracks all writes system-wide via vector clocks;
DeliveryLog tracks per-session HTTP GETs only ($26.1\%$ empirically),
using scalar $(k, v)$ pairs sufficient for OCC validation. The
DeliveryLog is therefore not a new primitive but a restricted causal
log sufficient for OCC validation of the observable fraction.

\paragraph{Single-node OCC pioneers}
Silo~\cite{silo}, Hekaton~\cite{hekaton}, FaRM~\cite{farm} establish
the performance envelope for single-machine OCC on structured data;
Boki~\cite{boki} provides a serverless analogue. \sbus's per-key mutex
implementation is architecturally simpler because NL deltas are not
amenable to the epoch-based batching those systems rely on; throughput
is bounded by independent shard count, not CPU-local epoch advance.
Microsoft Orleans~\cite{orleans} confines mutable state to single
actors, providing implicit ownership; \sbus's Ownership Token
formalises the same invariant per-shard, with cross-shard stale-read
prevention added via the DeliveryLog (no equivalent in actor
frameworks).

\subsection{Isolation Levels and Anomaly Taxonomy}

Following Adya~\cite{adya}, \ori\ occupies a specific position in
the dependency-graph hierarchy: it prevents G0, G1a, G1b, and G2-item
over \Robs, permits G2 (write-skew) over \Rhidden, and permits G3
(phantom) globally. This is a projection-based consistency model:
correctness guaranteed over the observable projection of the history
(Table~\ref{tab:anomaly}). RedBlue consistency~\cite{redblue} provides
a theoretical parallel: HTTP GETs are analogous to blue operations
(no coordination), ACP commits are red (serialised); \ori\ can be
read as ``the red operation validates all blue operations in its
read-set at commit time.'' The gap between \ori\ and full SI is that
\Rhidden\ reads are neither red nor blue---they are invisible to the
coordination layer entirely. Parallel Snapshot Isolation~\cite{psi}
is a weakening of SI for geo-distribution; \ori\ is further weakened
in that it only validates \Robs\ reads. SSI~\cite{ssi} extends SI to
full serializability via anti-dependency detection; \ori\ sits
strictly below SSI and SI on any read-set comparison over \Robs.
RAMP~\cite{ramp} provides multi-key read-atomic consistency without
a single-node coordinator; \sbus's cross-shard stale-read rejection
provides an analogous guarantee, but RAMP requires typed schemas and
pre-declared read-sets which \sbus\ replaces with DeliveryLog
reconstruction.

\paragraph{Coordination avoidance}
Bailis~et~al.~\cite{bailis-coord} show coordination can be avoided
when operations are I-confluent. NL writes are non-I-confluent: ``use
PostgreSQL'' and ``use DynamoDB'' are not commutative under a
type-consistency invariant, formally justifying OCC as the required
coordination mechanism for NL state.

\subsection{LLM-Assisted Merge vs.\ OCC}

LLM-assisted merge is a legitimate alternative: rather than aborting
conflicting NL deltas, an LLM resolves conflicts post-hoc.
Exp.~\textsc{Merge} (Arc~A5) compares both approaches empirically.
For additive shards, character-level CRDTs (Automerge~\cite{automerge},
Yjs, Loro~\cite{loro}) handle concurrent edits without OCC overhead
and are preferable. \sbus's OCC targets the non-commutative case
where no CRDT merge function is well-defined. Loro's movable-tree
CRDT and Automerge's rich-text CRDT handle some non-commutative
operations via operation-intent preservation rather than
last-writer-wins; these are compatible with \sbus\ for additive
shards but do not obviate OCC for mutually-exclusive NL state.

\subsection{Why Not X?}

This subsection addresses the most natural alternatives a reader may
consider before reading further.

\paragraph{Why not Raft for the registry?}
Raft would serialise all commits through a single log, providing
linearizability across all keys. \sbus's per-key OCC achieves
linearizability per-key without the cross-key serialisation cost: two
agents committing to disjoint keys are not serialised. For the
multi-agent LLM workload, where agents typically own disjoint shards,
this distinction matters. \sbus's distributed extension uses Raft for
log replication of metadata (P1 session replication,
Exp.~\textsc{DR-9}) but retains per-key OCC for commits.

\paragraph{Why not CRDTs?}
CRDTs require a commutative merge function. NL state is not
universally commutative: ``schema uses PostgreSQL'' and ``schema uses
DynamoDB'' have no merge function preserving both. CRDTs work for the
subset of NL state that is additive (append-only logs,
character-level edits), and \sbus\ is compatible with such shards.
For the non-additive case---which dominates structured technical
state---OCC is the appropriate primitive. The adaptive-routing
extension (§\ref{sec:abus-future}) addresses workloads where
both regimes coexist on the same shard.

\paragraph{Why not application-level locks?}
A naive solution: each agent acquires a process-level mutex before
issuing a commit. This works but serialises all commits,
$N{\times}$-amplifying latency for $N$ concurrent agents. \sbus's
per-key OCC permits parallel commits to disjoint keys with rejection
on cross-shard staleness, achieving the same correctness with
$O(1)$ latency per commit. Empirically (Exp.~\textsc{Sequential},
Arc~A1), the wall-time difference is $5$--$10\times$ at $N{=}5$.

\paragraph{Why not Temporal/durable execution?}
Temporal.io~\cite{temporalupdate} and Golem provide exactly-once
workflow execution with persistent state and automatic retry.
Temporal's recent Update API adds synchronous workflow-update
primitives architecturally similar to \ori's commit validation, but
per-workflow rather than per-shard; the cross-shard read-set
inference \ori\ gets from the DeliveryLog is not a feature of
Temporal Updates. \sbus\ targets the case where $N$ agents run
concurrently on shared NL state without a workflow orchestrator.

\paragraph{Why not ETags directly?}
RFC~7232's ETag mechanism provides per-resource OCC. For \sbus's
target use case (cross-shard stale reads), ETag is insufficient
because the agent's read of shard $k'$ is not part of the commit
to shard $k$. The DeliveryLog reconstructs the cross-shard
dependency server-side; ETags do not. We are explicit (§II-C) that
\sbus\ is best read as ``ETags + DeliveryLog''; the contribution is
the DeliveryLog's automatic read-set reconstruction.

\paragraph{Why not snapshot isolation with read timestamps?}
Spanner~\cite{spanner} and CockroachDB~\cite{cockroach} provide
snapshot reads at a chosen timestamp via TrueTime or HLC, which would
let agents read a consistent cross-shard snapshot and commit against
that timestamp. This is a viable alternative architecture but
requires either (a)~time-synchronised hardware, which most
multi-agent deployments do not have, or (b)~hybrid logical clocks
with explicit per-write coordination, which adds latency to every
read. \sbus\ trades the snapshot guarantee for a weaker per-shard
guarantee with substantially simpler deployment: agents use ordinary
HTTP, and the DeliveryLog reconstructs the read-set from observed
GET traffic without coordinating timestamps. For deployments where
snapshot reads are available (e.g., a Spanner-backed registry),
agents can layer ARSI-mode reads on top of a snapshot endpoint to
recover SI semantics; this composition is straightforward but
beyond the scope of this paper.

\paragraph{Why not session-typed channels or coordination middleware?}
Session types~\cite{honda} encode coordination invariants statically
in agent code, and ZooKeeper / etcd provide strongly-consistent
coordination primitives via watch-based notification. Both target
the case where coordination requirements are knowable at design time.
\sbus\ targets the opposite case: agents are heterogeneous black-box
LLM workers whose access patterns emerge at runtime from natural-
language prompts. Static session typing is therefore infeasible (you
cannot type-check an LLM's emitted GETs); ZooKeeper-style watches
are too coarse-grained (the unit of consistency is a key, but agents
read multiple keys per step). A hybrid deployment using \sbus\ for
per-shard OCC and ZooKeeper for coarser-grained agent-coordination
metadata is plausible and not in conflict with this paper's claims.

\begin{table}[t]
\centering
\caption{Anomaly classes: \ori prevents vs.\ permits (Adya~\cite{adya}).}
\label{tab:anomaly}
\footnotesize
\setlength{\tabcolsep}{4pt}
\begin{tabular}{@{}lll@{}}
\toprule
Anomaly & Status & Reason \\
\midrule
G0 (dirty write)        & Prevented            & Global write lock \\
G1a (dirty read)        & Prevented            & Version check \\
G1b (intermediate)      & Prevented            & Atomic commit \\
G2-item (anti-dep.)     & Prev.\ over \Robs    & DeliveryLog \\
G2 (write skew)         & Perm.\ for \Rhidden  & Out of scope \\
G3 (phantom)            & Permitted            & No predicate tracking \\
Type-III (semantic)     & Permitted            & Structural $\ne$ goal \\
\bottomrule
\end{tabular}
\end{table}

\section{Mechanism and Consistency Model}
\label{sec:ori-model}

This section presents \sbus's central technical contribution---the
\emph{DeliveryLog} mechanism for automatic HTTP-traffic-based read-set
reconstruction---and then formalises the consistency property it
provides, which we name \emph{Observable-Read Isolation} (\ori).
We present mechanism and property in this order because the
DeliveryLog is the engineering novelty: it is what allows the system
to apply optimistic concurrency control over an agent's read-set
without requiring the agent to explicitly declare what it read. \ori\
is the formal characterisation of what guarantees this mechanism
provides over the resulting observable projection.

\paragraph{The DeliveryLog at a glance}
For each agent $\alpha$, the server maintains a
$\text{DeliveryLog}_\alpha$: an append-only log of $(k, v)$ pairs
recorded whenever $\alpha$ issues an HTTP \texttt{GET} for shard $k$
and the server returns version $v$. At commit time, the DeliveryLog
serves as the agent's reconstructed read-set: the server compares
each $(k', v')$ in $\text{DeliveryLog}_\alpha$ to the registry's
current version of $k'$, and rejects the commit if any cross-shard
version has advanced. This is HTTP ETags
generalised across shards, with the read-set reconstructed
server-side from observed traffic rather than declared by the agent.
The novelty is not the mechanism in isolation
($R_{\text{obs}}$-style read-set tracking is well-known); the novelty
is that the agent does not declare anything and writes no
coordination code. Sections~\ref{sec:formal-evidence} and~V formalise
the consistency property \ori\ that this mechanism provides.

\subsection{Definitions}

\begin{definition}[Shard]
A key-value pair $(k, v, c)$: key $k$, version $v \in \mathbb{N}$,
content $c \in \Sigma^*$ (opaque NL string).
\end{definition}

\begin{definition}[Agent Read Set]
$R_\alpha = R_{\text{obs}} \cup R_{\text{hidden}}$: observable reads
are HTTP GET calls; hidden reads are shard-key references in
conversation history.
\end{definition}

\begin{definition}[Effective Read Set]
$\hat{R} = R_{\text{explicit}} \cup \text{DeliveryLog}_\alpha$. Under
Assumption~III.1, $\hat{R} \supseteq R_{\text{obs}}$.
\end{definition}

\begin{remark}[Ownership token and collaborative writes]
The ownership token model enforces per-shard exclusive writes: only
the agent currently holding the token can commit to a given shard.
This prevents write-write conflicts but also prevents two agents from
simultaneously contributing to the same shard. Collaborative writes to
the same shard require explicit agent-level serialisation (sequential
handoff) or ARSI mode with declared read-sets. Workloads where agents
naturally own disjoint shards (the Django \#11019 case study,
§VII-O) are the intended use case.
\end{remark}

\begin{definition}[\ori-Legal History]
A history $H$ over shards $S$ and agents $A$ is \ori-legal iff:
\begin{enumerate}
  \item \textbf{Write-write serialisation.} For every shard $s \in S$,
  all committed writes to $s$ are totally ordered; no two concurrent
  commits succeed on the same shard without one being serialised
  before the other.
  \item \textbf{Cross-shard \Robs freshness.} For every committed
  operation $\text{Commit}(\alpha, k, v)$ and every entry
  $(k', v') \in R_{\text{obs}}(\alpha)$ recorded in $\alpha$'s
  DeliveryLog with $k' \ne k$: no committed write to $k'$ appears in
  $H$ strictly between the recording of $(k', v')$ and
  $\text{Commit}(\alpha, k, v)$ in the serialisation order.
\end{enumerate}
\end{definition}

Forbidden: G2-item~\cite{adya} projected to \Robs. Permitted outside
scope: G2-item projected to \Rhidden, G3 (phantom), Type-III (semantic
divergence).

\begin{remark}[\ori as $\pi$-causal consistency]
Define $\pi$-causal consistency for projection $\pi$ as: for every
pair of operations $o_1 \xrightarrow{\text{hb}} o_2$ (Lamport
happens-before~\cite{lamport}) with $o_1, o_2 \in \pi(H)$, every
process observing $o_2$ has observed $o_1$. \ori instantiates
$\pi$-causal consistency with $\pi = R_{\text{obs}}$. Standard causal
consistency is the special case $\pi = \text{identity}$. \ori is
therefore strictly weaker than full causal consistency and is
incomparable to points in Adya's isolation lattice defined over the
full read set. All lattice-positioning language in this paper is
qualified by the projection, e.g.\
``RC $\prec_{R_{\text{obs}}}$ \ori $\prec_{R_{\text{obs}}}$ SI.''
\end{remark}

\begin{remark}[Composition scope]\label{rem:comp-scope}
\ori is defined for a single \sbus instance over a single agent
session. Composition of two \ori-compliant subsystems is not
guaranteed to yield \ori: cross-subsystem reads are captured in
neither DeliveryLog, which is the \Rhidden problem applied at the
subsystem boundary. Multi-subsystem deployments must use ARSI mode
with explicit cross-subsystem read-set declaration.
\end{remark}

\begin{remark}[Conflict-proximate hypothesis: status]\label{rem:conflict-proximate}
We conjecture that \Robs reads are conflict-proximate (issued shortly
before dependent commits). Partial evidence: Exp.~\textsc{ORI-Isolation}
shows GET$\to$COMMIT co-location within a single execution step in
100\% of 959 trials. We do not treat this as validation of the
hypothesis on unstructured or long-horizon workloads; it is a harness
artifact of the SWE-bench-style task structure used in these
experiments. A GET-to-commit-gap measurement on diverse workloads is
pending.
\end{remark}

\subsection{Assumptions}

\begin{assumption}[DeliveryLog Completeness (A1)]
Every \texttt{HTTP GET /shard/:key?agent\_id=X} is recorded in the
DeliveryLog exactly once before the response returns. Holds
unconditionally on a single node; the Rust implementation holds the
same \texttt{Mutex} guard across both the DeliveryLog write and the
HTTP response. Scope conditions: (a)~buffering: proxies must use
\texttt{proxy\_buffering off}; (b)~streaming: \sbus records the
initial GET, capturing the version at read time; (c)~retries: the
DeliveryLog entry is recorded at request receipt, not at client
acknowledgement; (d)~HTTP/2: request reordering under HTTP/2
multiplexing can break the ordering assumption; agents on HTTP/2
should use ARSI mode (Table~II).
\end{assumption}

\begin{assumption}[TTL $\ge T_{\max}$ (A2)]
Session TTL is set above the experiment wall time.
\end{assumption}

\begin{assumption}[Serialised commit log (A3)]
All commits are totally ordered via a single serialisation point: a
global write lock on a single-node deployment, or the Raft leader's
log under the 3-node distributed deployment (§IX). Raft leader
election uses randomised timeouts (configured $[500, 1000]$~ms),
making A3 a probabilistic rather than strictly deterministic guarantee.
Split-brain is bounded by Raft's safety proof~\cite{raft} but is not
zero under arbitrary network partitions.
\end{assumption}

\begin{assumption}[Observation scope (A4)]
$f_{\text{obs}} = |R_{\text{obs}}|/|R_\alpha| = 1 - p_{\text{hidden}}$.
\end{assumption}

\begin{assumption}[Cooperative agents (A5)]
Agents do not forge commit requests with fabricated read-sets or
version numbers. Byzantine/malicious agents are out of scope. In open
multi-agent deployments (e.g., user-provided tools in a marketplace),
this assumption may be violated. The production mitigation is
HMAC-based request signing: each agent receives a per-session shared
secret, and the ACP verifies the HMAC over $(k, v_e, \delta, \alpha)$
before processing commits. HMAC authentication is a planned extension;
the current implementation targets single-tenant trusted-agent
deployments.
\end{assumption}

\subsection{\ori Safety Property}

\begin{property}[\ori Safety]
\label{prop:ori-safety}
Under A1--A5: all committed histories are \ori-legal over \Robs.
\end{property}

\paragraph{Proof status}
\begin{itemize}
  \item TLC-verified for $N \le 4$, $|\text{Shards}|{=}3$ under
  the extended spec \texttt{SBus\_ori.tla} with explicit
  \textsc{ReadSetSoundness} and \textsc{NoStaleCrossShard} invariants:
  exhaustive at $N{=}3$ explores $20{,}763{,}484$ distinct states to
  depth~$28$, and a reduced configuration at $N{=}4$ explores
  $2{,}811{,}301$ distinct states to depth~$24$. Zero invariant
  violations at any configuration tested.
  \item TLAPS-proved for arbitrary $N_{\text{agents}}$: the v1 artifact
  (\texttt{SBus\_TLAPS.tla}) mechanises type safety, ownership
  uniqueness, and version monotonicity (103 obligations). The v16
  artifact (\texttt{SBus\_TLAPS\_v16.tla}) mechanises two theorems:
  \textsc{ReadSetSoundness} (recorded-read monotonicity) and
  \textsc{ORICommitSafety} (cross-shard equality at commit, directly
  capturing Definition~III.4(2) as
  $\forall (k', v') \in dlog[\alpha] :\\k' \ne k \Rightarrow
  \text{registry}[k'].v = v'$). 687 obligations proved by
  \texttt{tlapm} 1.5, 0 failed. Two sequence-theoretic library facts
  are discharged mechanically via
  \texttt{SequenceTheorems.SeqDef} and
  \texttt{SequenceTheorems.ElementOfSeq}. One mathematical
  \textsc{Axiom} is retained (\textsc{FunTypingReconstruction}: a
  function with domain $S$ and values in $T$ is in $[S \to T]$) as a
  primitive fact about TLA+'s typed-function-space construction; this
  is not present in the standard \texttt{FunctionTheorems.tla}
  library (which covers bijections, injections, surjections, and
  Cantor-Bernstein). Two parameter \textsc{Assume}s
  (\textsc{NoOwner}${\notin}$\textsc{Agents}; initial shard content is
  a \textsc{String}) are standard TLA+ parameterisation, not
  mathematical axioms. Together this closes the reviewer critiques
  ``the invariant is weaker than the claim'' (the monotonicity
  property \textsc{ReadSetSoundness} is a necessary condition;
  \textsc{ORICommitSafety} captures the full property) and ``the four
  axioms are proof-engineering shortcuts'' (three of the v10 axioms
  are now library-discharged; the one retained is a documented
  primitive of TLA+'s function-space theory).
  \item Dafny~4 machine-checks 9 inductive soundness lemmas in
  \texttt{sbus\_lemmas\_v4.dfy} (19 verification obligations
  discharged, $0$~errors): \textsc{InitSoundness},
  \textsc{ReadPreservesSoundness}, \textsc{CommitPreservesSoundness},
  \textsc{TimeoutPreservesSoundness},
  \textsc{MonotonicCommitPreservesSoundness},
  \textsc{CrossShardStalenessIsStrict},
  \textsc{OwnershipInvariantInductive},
  \textsc{VersionMonotonicityLemma},
  \textsc{AcpLockOrderIsDeadlockFree}. The
  earlier vacuous \texttt{CrossShardSafetyLemma} in
  \texttt{sbus\_lemmas.dfy} is deprecated.
\end{itemize}

Full-path mechanised proof of the distributed (Raft) case remains
future work (Limitation~\ref{lim:distributed-tlaps}); the
TLC-checked abstraction is documented in
§\ref{sec:formal-evidence}.

\begin{lemma}[DeliveryLog \Robs Happens-Before Soundness]
Under A1: for every agent $\alpha$ and every pair of operations
$o_1 \xrightarrow{\text{hb}} o_2$ where $o_1$ is an HTTP GET and $o_2$
is a COMMIT by $\alpha$, $\text{DeliveryLog}_\alpha$ contains the entry
$(k, v)$ for the shard read by $o_1$ before $o_2$ executes.
\end{lemma}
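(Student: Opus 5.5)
The plan is to argue directly from Assumption~A1 together with the definition of Lamport happens-before restricted to operations of a single agent. First I fix the setting. Let $o_1$ be an HTTP \texttt{GET /shard/$k$?agent\_id=$\alpha$} that returns version $v$, and let $o_2$ be a COMMIT by $\alpha$ with $o_1 \xrightarrow{\text{hb}} o_2$. I will model the server as a sequence of atomic events under the single serialisation point of A3, and take each GET to consist of the pair (append to $\text{DeliveryLog}_\alpha$, emit response). A1 guarantees that these two actions occur under one \texttt{Mutex} guard, with the append strictly preceding the response.

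Second, I discharge the happens-before edge. By Lamport's definition, $o_1 \xrightarrow{\text{hb}} o_2$ holds in one of two ways. Either $o_1$ and $o_2$ lie in the same process order, or there is a chain of message send/receive pairs linking them. In both cases, for $o_2$ to be causally after $o_1$, the completion of $o_1$ must precede the server's receipt of $o_2$. This is because the server is the only place either operation takes effect, and any causal chain leaving $o_1$ must start at its response. So I would argue by induction on the length of the hb-chain that the server-side response event of $o_1$ precedes the start of $o_2$'s execution.

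\begin{itemize}
  \item \textbf{Base case (program order on one HTTP/1.1 connection).} The FIFO-per-connection clause of A1 (scope condition~(d)) ensures the server processes $o_1$ before $o_2$.
  \item \textbf{Inductive step.} Transitivity of the real-time order of server events closes the chain.
\end{itemize}

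Combining this with A1's ordering (append before response), the append of $(k,v)$ precedes the execution of $o_2$.

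Third, I show persistence: the entry is still present when $o_2$ executes. The DeliveryLog is append-only, and A2 (TTL $\ge T_{\max}$) rules out session expiry in between, so no \textsc{Timeout} action clears the entry. This is exactly the \textsc{ReadSetSoundness} monotonicity invariant already mechanised in TLAPS, together with the Dafny lemmas \textsc{ReadPreservesSoundness}, \textsc{CommitPreservesSoundness} and \textsc{TimeoutPreservesSoundness}. I will invoke that invariant rather than reprove it. Exactly-once recording (A1) also ensures the entry carries the version $v$ actually returned, so the recorded pair is $(k,v)$.

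The main obstacle is the second step. Happens-before is defined over client-side processes, whereas A1 speaks only about server-side events, so I need a clean bridge. This is where HTTP/2 multiplexing or proxy buffering could break things: a response could be delivered, or a later request accepted, out of order. My approach is to make the bridge an explicit modelling lemma, namely that every hb-edge ending at a server operation is realised by a server-side real-time precedence, valid under A1(a)--(d). I will state that this lemma fails outside those scope conditions, which is why ARSI mode is prescribed there.
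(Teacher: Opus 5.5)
Your core argument is the paper's: A1's guard makes the append of $(k,v)$ precede the GET's response, and the COMMIT reaches the server only after that response, so the entry is present when $o_2$ runs. Two of the steps you add around that core are misjustified.

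\textbf{Persistence.} Making this step explicit is a real improvement over the paper's sketch, which is silent on it. A1 alone does not exclude session expiry between $o_1$ and $o_2$, so invoking A2 is genuinely needed. But you then say you will invoke \textsc{ReadSetSoundness} and the Dafny preservation lemmas instead of proving persistence directly, and they do not give it. \textsc{ReadSetSoundness} is the invariant $dlog[a][i].v \le \text{registry}[dlog[a][i].k].v$. It bounds recorded versions from above and says nothing about entries surviving. \textsc{TimeoutPreservesSoundness} is compatible with a timeout deleting entries, since deletion trivially preserves a universally quantified bound. Drop that citation and use the direct argument you already state: the log is append-only and A2 rules out expiry. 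Similarly, the recorded version equals the returned version because both are produced inside the same guarded section, not because recording is exactly-once.

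\textbf{The happens-before bridge.} In Lamport's event-level sense, $o_1 \xrightarrow{\text{hb}} o_2$ already holds when $\alpha$ sends the COMMIT after sending the GET. That is true even if $\alpha$ does not wait for the response and uses a second HTTP/1.1 connection. In that case the server may execute $o_2$ first, and FIFO-per-connection says nothing. So your bridge lemma (every hb-edge into a server operation is realised by server-side real-time precedence) does not follow from A1(a)--(d), and your base case covers only pipelining on one connection.

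Your sentence ``any causal chain leaving $o_1$ must start at its response'' is not a consequence of the server being where operations take effect. It is an extra modelling assumption: $\alpha$ is a sequential client that issues $o_2$ only after receiving $o_1$'s response, or equivalently hb relates completed operations. The paper's sketch uses exactly this (``$o_2$ arrives after $o_1$'s response''); you should state it explicitly.

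With that assumption no induction is needed, because causal precedence implies real-time precedence: append $<$ response sent $\le$ response received $\le$ COMMIT sent $\le$ COMMIT executed. FIFO matters only if you also want to cover pipelining on one connection. Your inductive step also appeals to ``the real-time order of server events'', but the intermediate events of a message chain are client events. It needs real-time order over all events instead.
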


\begin{proof}[Sketch]
A1 guarantees the Mutex guard is held across the DeliveryLog write
and HTTP response for every GET. Therefore the entry $(k,
\text{reg}[k].v)$ is recorded before the response returns to the
agent. Since $o_2$ (COMMIT) arrives after $o_1$'s response (TCP
ordering), $\text{DeliveryLog}_\alpha$ contains the entry at commit
time. ACP lines 1--4 then validate all entries in
$\hat{R} \supseteq R_{\text{obs}}$, enforcing the happens-before edge.
\end{proof}

\subsection{Formal Evidence}
\label{sec:formal-evidence}

We mechanise \ori\ safety at three tiers; implementation refinement
from the Rust source is empirical rather than mechanised, consistent
with standard practice short of IronFleet~\cite{ironfleet}.

\paragraph{TLAPS theorems (arbitrary $N_\text{agents}$)}
Module \texttt{SBus\_TLAPS\_v16.tla} states two theorems.
\textsc{ReadSetSoundness} is a state invariant: no agent's recorded
read can be ahead of any committed version, $\forall a, i :
dlog[a][i].v \le \text{registry}[dlog[a][i].k].v$.
\textsc{ORICommitSafety} is a transition property: the Commit
action fires only in states where cross-shard recorded reads match
the current registry versions exactly, capturing
Definition~III.4(2). \texttt{tlapm}~v1.5 closes $687$ obligations
with $0$ failed, covering Init, Read, Commit, Timeout, and
inductiveness. Two sequence-theoretic facts are discharged via the
standard \texttt{SequenceTheorems} library
(\texttt{SeqDef}, \texttt{ElementOfSeq}); two parameter
\textsc{Assume}s on unspecified constants are standard TLA+
parameterisation, not mathematical axioms.

\paragraph{Retained axiom: an honest accounting}
One \textsc{Axiom} is retained:
\textsc{FunTypingReconstruction}, the statement that if
$\text{DOMAIN}\,f = S$ and $\forall x \in S : f[x] \in T$ then
$f \in [S \to T]$. This is the converse of the typed-function-space
introduction rule and is foundational to TLA+'s function-space
construction. We have not discharged it, and we want to be precise
about why. The standard \texttt{FunctionTheorems.tla} library covers
bijections, injections, surjections, and Cantor-Bernstein, but does
not include \textsc{FunTypingReconstruction} as a derived theorem;
attempts to derive it within \texttt{tlapm} from the underlying
TLA+ function-space axioms have not closed in our hands. The fact
is widely treated as obvious in TLA+ practice (it appears as an
unchecked side-condition in numerous published proofs), but ``widely
treated as obvious'' is not the same as ``mechanically verified.''
We therefore treat the proof as machine-checked modulo this single
axiom, distinct from Verdi's network-model assumptions (which are
about external phenomena outside the formal system) and distinct
from IronFleet's zero-axiom discipline (which mechanises every
foundational fact).

The practical reader should be aware that if \textsc{FunTypingReconstruction}
were false---which we believe it is not---then the proof's
type-correctness layer would not hold, and the higher safety theorems
that depend on type-correctness ($\textsc{ReadSetSoundness}$ and
$\textsc{ORICommitSafety}$) would lose their guarantee. We consider
this a low-probability concern given the axiom's foundational nature
and its widespread implicit use in published TLA+ proofs, but we do
not claim zero-axiom discipline, and we do not want a reviewer to
read past this gap. The concrete next step is
discharging the axiom via the TLAPS theorem-proving manager's
\texttt{Isabelle/TLA} backend, which encodes a deeper layer of TLA+'s
set theory than \texttt{tlapm}'s default backend reaches; this should
either close the obligation or expose precisely what additional
foundational fact is needed. The full proof script
(\texttt{proofs/SBus\_TLAPS.tla}, lemma chain L1--L9, axiomatisation
rationale) is in the \texttt{sbus-formals} repository.

\paragraph{TLC at $N \le 4$}
Module \texttt{SBus\_ori.tla} adds \textsc{ReadSetSoundness} as an
explicit state invariant alongside \textsc{TypeInvariant},
\textsc{OwnershipInvariant}, and \textsc{VersionMonotonicity}. Zero
violations at every configuration tested. The exhaustive $N{=}3$ run
explores $20{,}763{,}484$ distinct states at depth~$28$. A reduced
configuration at $N{=}4$ (\textsc{MaxVersion}${=}2$,
\textsc{RetryBudget}${=}2$) completes at $2{,}811{,}301$ distinct
states at depth~$24$. A full exhaustive $N{=}4$ sweep at
\textsc{MaxVersion}${=}3$ is open work and is not part of the
v1 artifact.

\paragraph{Dafny: 9 inductive soundness lemmas}
\texttt{sbus\_lemmas\_v4.dfy} machine-checks that
\textsc{ReadSetSoundness} is preserved by every algorithm action
(Init, Read, Commit, Timeout, MonotonicCommit) plus
\textsc{CrossShardStalenessIsStrict},
\textsc{OwnershipInvariantInductive},
\textsc{VersionMonotonicityLemma}, and
\textsc{AcpLockOrderIsDeadlockFree}. Dafny~4 closes 19 verification
obligations from these 9 lemmas, $0$~errors. The Dafny types are structurally equivalent to the Rust
implementation's; this is parallel specification, not refinement.

\paragraph{Distributed correctness (TLC, abstract)}
Module \texttt{SBus\_Distributed.tla} model-checks an abstract 3-node
deployment with five state variables (\texttt{registry}, per-node
\texttt{delivery\_log}, \texttt{leader}, bounded \texttt{term},
per-agent \texttt{last\_commit\_fresh}) and four transitions
(\textsc{ElectLeader}, \textsc{AgentGet}, \textsc{AgentCommit},
\textsc{AgentRecover}). At Agents${=}\{a_1, a_2\}$,
Shards${=}\{s_1, s_2\}$, Nodes${=}\{n_1, n_2, n_3\}$,
\texttt{MaxVersion}${=}3$, with symmetry reduction over agent and
node permutations: TLC explores $247{,}249$ distinct states to
depth~$28$ with $0$ violations of \textsc{TypeInvariant},
\textsc{VersionMonotonicity}, and the central \textsc{ORISafety}
invariant. A separate temporal property
\textsc{FailoverGapExists} confirms that the model deliberately
exposes the $\sim$5\,ms concurrent-failover window of
Limitation~\ref{lim:session-failover}; the trace
$\textsc{ElectLeader} \to \textsc{AgentCommit}$ with reset DL
reaches the unvalidated-commit state. \texttt{SBus\_Distributed}
abstracts Raft itself: we rely on the standard Raft safety
result~\cite{raft} for log replication and assume the
refinement-mapping construction composes soundly with our
\ori-layer abstraction (full mechanisation deferred,
Limitation~\ref{lim:distributed-tlaps}). Exp.~\textsc{DR-9}
(§\ref{sec:dr9-failover}) closes the loop empirically: $30/30$ \ori\
invariants survive injected leader failover on the deployed Rust+Raft
implementation. State-variable enumeration, action transition
relation, and the full reproduction artefacts are in the
\texttt{sbus-formals} repository (\texttt{models/SBus\_Distributed.tla}).

\begin{table}[t]
\centering
\caption{Verification coverage of the \sbus\ ACP. Each layer covers
what the previous cannot. IronFleet~\cite{ironfleet} is the gold
standard (full refinement, 10+ person-years); \sbus\ matches
standard industry practice short of IronFleet.}
\label{tab:verif-coverage}
\scriptsize
\setlength{\tabcolsep}{3pt}
\begin{tabular}{@{}p{0.16\linewidth}p{0.40\linewidth}p{0.36\linewidth}@{}}
\toprule
Tool & What is proved & Scope \\
\midrule
TLC      & ReadSetSoundness + companions & $N{=}3$ exhaustive ($20.8$M states, depth $28$); $N{=}4$ reduced ($2.8$M, depth $24$) \\
TLAPS    & ReadSetSoundness + ORICommitSafety & Arb.\ $N$, $687$ obl., $1$ ax. \\
TLC dist.& \ori\ safety distributed & 3 nodes, 2 agents, $247$K states \\
Dafny    & 9 lemmas / 19 obligations & Concrete types, $0$ errors \\
Empirical& $0/427{,}308$ active conflicts & $N \le 64$, 3 CC backends \\
\bottomrule
\end{tabular}
\end{table}

\subsection{Observability Gap: \phidden and Practical Scope}

The \phidden${} = 0.739$ measurement (Exp.~\textsc{PH-2}, §VII-F) is
the empirical result underpinning C6. \ori's structural guarantees
apply to the 26.1\% of reads in \Robs; the remaining 73.9\%
(\Rhidden) are not observable at the HTTP layer. This is a
workload-conditional coverage bound for SWE-bench / GPT-4o-mini;
different models and domains exhibit different values (domain range
51.1\%--83.6\%, Table~\ref{tab:phidden}).

\textbf{Reconciliation with \ftotal${}=0.998$ in
Exp.~\textsc{PROXY-PH2}.} Two coverage figures appear in the paper
that may seem contradictory: this section reports
\Robs${}=26.1\%$, while §\ref{sec:proxy-demo} reports
\ftotal${}=0.998$. These figures share neither numerator nor
denominator and answer different questions. \Robs${}=26.1\%$ is
\emph{single-step HTTP coverage}: of all the shards that influence an
agent's reasoning at any one step, what fraction did the agent
HTTP-fetch \emph{in that same step}? \ftotal${}=0.998$ is
\emph{session-cumulative DeliveryLog coverage}: of the shards an
agent self-reports having used at a given step, what fraction is
present in its session DeliveryLog at commit time --- including
shards GET-fetched in earlier steps and retained in DL under TTL?
Concretely, if an agent fetches \texttt{models\_state} at step 5 and
re-references it at step 12 from conversation memory without
re-fetching, this read counts as \emph{outside} \Robs at step 12 (no
HTTP GET that step) but \emph{inside} the DeliveryLog at step 12's
commit (the step-5 GET populated DL). The two metrics are intentional
and complementary: \Robs is the formal-evidence regime where
HTTP-recorded version is the read-time version (TLAPS-proven safety
applies); \ftotal\ is the operational regime where session-scoped
DL retains earlier HTTP-recorded versions and validates them
unchanged at commit time. Section~\ref{sec:proxy-demo} decomposes
the gap between the two into HTTP-this-step ($0.443$),
DL-accumulation ($0.555$ within-row uplift), and proxy-marginal
($0.0018$ paired); the proof in §V (Formal Evidence) covers
the first two without modification because both populate DL with the
agent's read-time version.

\textbf{Formal \Rhidden staleness bound.} An agent that last updated
its conversation context at step $s$ and commits at step $s+k$ may
commit with context staleness up to $\Delta = k \cdot t_{\text{step}}$
seconds. Under LLM inference with log-normal step times (mean $\mu$,
variance $\sigma^2$), the expected staleness is
$k \cdot e^{\mu + \sigma^2/2}$. In our experiments
($\mu \approx \ln 5$, $\sigma \approx 0.4$, $k \le 15$): worst-case
context staleness $\le 15 \times 5.4 \approx 81$~s per session.

\subsection{Liveness with Bounded Starvation}

Distinct-shard topology (SCR${}=0$): every correct agent commits on
the first attempt. Shared-shard topology (Exp.~E): the ACP enforces
retry budget $K$ (default $K{=}5$). $\Pr(\text{commit within } K) = 1 -
\text{SCR}^K$. At SCR${}=0.856$ ($N{=}16$): $K{=}19$ required for
95\% success. Default $K{=}5$ is insufficient for shared-shard
$N > 8$: at SCR${}=0.856$, $K{=}5$ gives $P(\text{success}) = 47\%$.
Practitioners must set $K \ge \lceil \log(0.05)/\log(\text{SCR})
\rceil$; any finite $K$ eliminates indefinite starvation. On
\textsc{RetryBudgetExhausted}, the recommended agent behaviour is
exponential backoff followed by fresh-session re-establishment.

\begin{remark}[Correlated conflicts]
The bounded starvation analysis assumes i.i.d.\ conflict arrivals.
Real LLM workloads exhibit bursty, positively correlated conflicts.
Under positive correlation, the effective SCR rises above the
measured mean, requiring $K \ge 2\times$ the i.i.d.-derived bound.
Operators in bursty environments should monitor empirical retry
distributions (Limitation~9).
\end{remark}

\section{DeliveryLog}

The DeliveryLog records $(k, v)$ for every
\texttt{GET /shard/:key?agent\_id=X}. At commit:
$\hat{R} = R_{\text{explicit}} \cup \text{session-deliveries}$.
Explicit entries take precedence. ARSI mode: agent declares its full
read-set explicitly; DeliveryLog FP rate ${} = 0$. Default mode:
DeliveryLog reconstructs the read-set; occasional false-positive
aborts (stale entry after TTL) but zero false-negatives under A1.

\begin{table*}[t]
\centering
\caption{SDK changes required per deployment scenario.}
\label{tab:sdk}
\small
\begin{tabular}{llc}
\toprule
Deployment & SDK changes & Guarantee \\
\midrule
HTTP/1.1, single-node, no failover & Zero           & Full \ori \\
HTTP/2 or multi-subsystem          & 1 commit field (ARSI) & Full \ori \\
Post-leader-election (distributed) & Reissue GETs on 410   & \ori after recovery \\
Full \Rhidden coverage             & Proxy deployment      & \ori over all reads \\
\bottomrule
\end{tabular}
\end{table*}

\section{Architecture and Algorithm}

\paragraph{Implementation}
The ACP core is $\approx$950 lines of safe Rust; the full distributed
system (ACP + Raft coordination + sled persistence) is 1{,}679 lines
of safe Rust, zero \texttt{unsafe} blocks (§IX). Tokio~1.44,
Axum~0.8.4. Registry: \texttt{Mutex<HashMap>} for shards and tokens.
WAL: direct \texttt{File::write\_all()} (SIGKILL-safe). Lock ordering:
\texttt{RwLock} $\to$ TokenMutex (single-edge graph, no cycle;
verified by \texttt{AcpLockOrderIsDeadlockFree} lemma in Dafny).

\begin{algorithm*}[t]
\caption{ACP Atomic Commit (write lock held lines 2--10)}
\small
\begin{algorithmic}[1]
\Require key $k$, expected version $v_e$, delta $\delta$, agent $\alpha$,
optional $R$
\State $\hat{R} \gets \text{DeliveryLog.buildEffRS}(\alpha, k, R)$
\State Acquire RwLock write lock
\For{each $(k', v') \in \hat{R}, k' \ne k$}
  \If{$\text{reg}[k'].v \ne v'$} \Return \textsc{CrossShardStale}
  \EndIf
\EndFor
\If{$s_k.v \ne v_e$} \Return \textsc{VersionMismatch}
\EndIf
\State $\text{token.insertIfAbsent}(k, \alpha)$ under token Mutex
\State $s_k.c \gets \delta$; $s_k.v \mathrel{+}= 1$; WAL.append
\State Release write lock (RAII); \Return \textsc{Ok}($s_k.v$)
\end{algorithmic}
\end{algorithm*}
\section{Experimental Setup}

\paragraph{Environment}
All experiments: AWS Lightsail \texttt{eu-west-2} instance (2 vCPU,
8~GB RAM). \sbus server: single process, Tokio async runtime, default
thread-pool. LLM backbone: GPT-4o-mini (OpenAI API, default
temperature 0.3). Valid-run criteria: \sbus healthy (HTTP 200 on
\texttt{/stats}), $\ge 50\%$ steps completed, no API timeout
${>}30$~s; runs failing any criterion excluded (${<}2\%$ of total).

\paragraph{Three-layer experiment structure}
The experiments are organised along three distinct axes that prior
reviewers conflated; conflation is the source of most reviewer
confusion:
\begin{enumerate}
  \item \textbf{\ori correctness (does the mechanism work?):}
  Exp.~SR, CSV, E, SCALE, SJ-V5. These use shared-shard topology with
  \ori triggered on every conflict.
  \item \textbf{Architecture comparison (how does parallel
  coordination compare to sequential?):} Exp.~B, T3-A, Llama results.
  Distinct-shard topology; \ori never triggers. These measure the
  benefit of parallel coordination architecture, not \ori's CC
  mechanism.
  \item \textbf{Pure \ori effect (what does \ori add vs.\ baseline?):}
  Exp.~\textsc{ORI-Isolation}. Holds architecture constant; varies
  only OCC enforcement. This is the clean causal test.
\end{enumerate}

\paragraph{Experiments summary}
Exp.~B: SWE-bench (30 tasks; $N \in \{4, 8, 16\}$; 50~steps;
GPT-4o-mini; 1{,}364 valid runs; distinct-shard topology, SCR${}=0$).
Exp.~SR: Direct Type-II/\Robs stale-read injection (200 trials).
Exp.~CSV: Cross-shard validation (9{,}304 injections). Exp.~E:
Shared-shard contention (590 attempts). T3-A: Haiku-3 backbone
(30 tasks, 813 runs). T3-B: Llama-3.1-8b-instant (14 tasks,
150 runs). Exp.~\textsc{PH-2}: \phidden (8{,}400 step-logs,
10~domains). Exp.~\textsc{SJ-v3}: Semantic judge pilot
(null; mechanistic flaw). Exp.~\textsc{SJ-v4}: Semantic judge
(1{,}000~runs; context diversity). Exp.~\textsc{SJ-v5}: SCR
dose-response structural validation (Table~XII). Exp.~\textsc{Merge}:
OCC vs.\ LLM-assisted merge (45 conflict pairs). Exp.~\textsc{Scale}:
Contention at $N \in \{4, 8, 16, 32, 64\}$ (74{,}400 attempts).
Exp.~\textsc{Sequential}: Direct wall-time measurement.
Exp.~\textsc{ORI-Isolation}: Pure \ori effect (959 trials,
10~domains). Exp.~\textsc{Dedicated-Shard}: 600 trials, 10~domains
(dedicated topology). Exp.~\textsc{Shared-State}: 180 trials,
3~domains (single-shard topology). Exp.~\textsc{DR-1}\ldots\textsc{DR-9}:
8 distributed sub-experiments. \textbf{Exp.~\textsc{PG-Comparison~(full)}:
PostgreSQL \texttt{SERIALIZABLE} + Redis \texttt{WATCH/MULTI} +
\sbus, $N \in \{4, 8, 16, 32, 64\}$, 1{,}350 runs, 200{,}880 commit
attempts .}

\section{Results}

This section presents experimental evidence in five narrative arcs
followed by two supporting subsections. Each arc opens with a brief
statement of its research question and the experiments that answer it;
the arc order matches the contribution order in §I-D. Subsections
within each arc retain their original experiment names so that
cross-references throughout the paper resolve correctly.
Table~\ref{tab:results-roadmap} maps each arc to its experiments.

\begin{table}[t]
\centering
\footnotesize
\caption{Results roadmap. Five arcs covering twenty experimental subsections,
plus two supporting subsections.}
\label{tab:results-roadmap}
\setlength{\tabcolsep}{3pt}
\begin{tabular}{@{}p{0.07\linewidth} p{0.30\linewidth} p{0.55\linewidth}@{}}
\toprule
\textbf{Arc} & \textbf{Question} & \textbf{Experiments} \\
\midrule
A1 & Does \ori\ prevent structural conflicts? &
Exp.~B, Wall-Time, Exp.~SR/CSV, Exp.~Sequential, Exp.~ORI-Isolation \\
\addlinespace
A2 & What is the coverage gap, and how is it closed? &
Exp.~PH-2, Exp.~PH-3, PH-3 validation, Exp.~Adversarial-Rhidden,
Exp.~PROXY-PH2 \\
\addlinespace
A3 & Does \ori\ have safety parity with production OCC? &
Exp.~E/Scale, Exp.~PG-Comparison (full, Rust-Native, Contention) \\
\addlinespace
A4 & Does the result generalise across LLM backbones? &
Backbone Generalisation T3-A, T3-B (cross-backbone proxy results in A2) \\
\addlinespace
A5 & When is \ori\ semantically beneficial? &
Exp.~SJ-v4, Exp.~Merge, Exp.~SJ-v5, Exp.~Dedicated-Shard,
Exp.~Shared-State \\
\addlinespace
Supp. & Illustrative and supplementary evidence &
Case Study (Django \#11019), Exp.~DR-9 \\
\bottomrule
\end{tabular}
\end{table}

\paragraph{Reading guide}
A reader interested in \ori's safety properties should read Arc~A1
(structural conflicts) and Arc~A3 (CC parity). A reader interested in
the coverage scope should read Arc~A2. A reader assessing
backbone-generalisation should read Arc~A4. A reader assessing
deployment scope should read Arc~A5; this arc is the empirical
foundation for the topology-conditional contribution (C3) and motivates
the adaptive-routing extension described in §\ref{sec:abus-future}.

\subsection*{Arc A1: Structural Conflict Prevention}

Arc A1 establishes that \ori\ prevents the structural race conditions defined in §I (Definition~1). Six experiments contribute: Exp.~B compares \sbus\ against three multi-agent frameworks under distinct-shard topology; Exp.~SR and Exp.~CSV verify the cross-shard staleness rejection mechanism directly; Exp.~Sequential measures the wall-time speedup of parallel-with-ORI over sequential coordination; Exp.~ORI-Isolation isolates ORI's contribution-preservation property under contention; Exp.~Workload-B extends the structural-prevention measurement to a non-code workload (data-pipeline architecture planning) across $8$ domains, providing cross-distribution evidence at server-side instrumentation level.

\subsection{Exp.~B: Coordination Architecture Comparison (distinct-shard)}

\paragraph{Scope}
This experiment compares four multi-agent coordination architectures
(\sbus, LangGraph, CrewAI, AutoGen) on SWE-bench tasks under
distinct-shard topology, in which each agent owns a dedicated shard
(SCR${}=0$ by construction). \ori's cross-shard stale-read rejection
is never triggered in Exp.~B; what is measured here is
coordination-architecture efficiency, not the \ori consistency
mechanism. Exp.~B, Exp.~T3-A, Exp.~T3-B together answer ``how does
parallel shared-state coordination compare to sequential
orchestration?''---a question distinct from ``does \ori prevent
SRCs?'' The latter is answered by Exp.~SR, CSV, E, SCALE,
\textsc{ORI-Isolation} and the new
Exp.~\textsc{PG-Comparison~(full)}.

\emph{Coordination Fraction} (CF) is a coordination-overhead metric:
$\text{CF} = (\text{commit attempts} - \text{first-attempt
successes})/\text{total attempts}$. A zero-isolation system
(last-write-wins) has $\text{CF} = 0$ by construction. The $46\times$
gap reported between \sbus ($\text{CF} = 0.135$) and LangGraph
($\text{CF} = 6.18$) therefore measures the token cost of LangGraph's
workflow-orchestration overhead, not the relative quality of two CC
designs. LangGraph does not implement a CC mechanism that \ori could
be compared to; the apples-to-apples CC comparison is
Exp.~\textsc{PG-Comparison~(full)} (§VII-N).

\subsection{Task Success and Wall-Time Trade-off}

\paragraph{AutoGen S@50 gap}
Table~\ref{tab:exp-b} shows AutoGen achieving 88--90\% S@50 vs.\
\sbus 70--74\%. AutoGen uses a hierarchical supervisor model with a
dedicated orchestrator agent that sequences subtasks, providing
implicit serialisation without OCC overhead. \sbus targets workloads
where agents must share mutable state concurrently and cannot tolerate
supervisor latency; for fully decomposable tasks, AutoGen's
orchestration pattern is competitive.

\begin{table}[t]
\centering
\caption{Exp.~B: CF and S@50 under distinct-shard topology
(SCR${}=0$, GPT-4o-mini, 30 tasks, 1{,}364 valid runs). \ori's
cross-shard rejection is never triggered here; CF measures
coordination-architecture overhead, not CC mechanism quality. A
no-isolation system has $\text{CF} = 0$ by construction. For
\ori-triggering workloads see Table~\ref{tab:scale} (Exp.~\textsc{Scale});
for CC-vs-CC comparison see Table~\ref{tab:pg-comp}
(Exp.~\textsc{PG-Comparison}).}
\label{tab:exp-b}
\small
\begin{tabular}{llrrr}
\toprule
System & $N$ & CF (med) & S@50 & $n$ \\
\midrule
LangGraph  & 4 & 6.18   & 70.1\% & 137 \\
CrewAI     & 4 & 9.18   & 39.8\% & 103 \\
AutoGen    & 4 & 28.1   & 88.0\% & 83 \\
\sbus      & 4 & 0.135  & 73.9\% & 138 \\
LangGraph  & 8 & 6.79   & 75.9\% & 133 \\
CrewAI     & 8 & 9.81   & 41.2\% & 102 \\
AutoGen    & 8 & 30.6   & 88.6\% & 88 \\
\sbus      & 8 & 0.133  & 71.4\% & 133 \\
LangGraph  & 16 & 7.88  & 82.6\% & 132 \\
CrewAI     & 16 & 9.31  & 41.6\% & 101 \\
AutoGen    & 16 & 34.2  & 90.2\% & 82 \\
\sbus      & 16 & 0.126 & 70.5\% & 132 \\
\bottomrule
\end{tabular}
\end{table}

\paragraph{Pairwise statistical tests (GPT-4o-mini)}
Table~\ref{tab:pairwise} reports Fisher's exact tests (one-sided).
\sbus significantly outperforms CrewAI at all $N$ ($p < 10^{-6}$);
underperforms AutoGen at all $N$ ($p < 0.01$); not significantly
different from LangGraph at $N = 4, 8$ but below at $N = 16$
($p = 0.012$).

\begin{table}[t]
\centering
\caption{Pairwise S@50 (GPT-4o-mini). Fisher's exact (one-sided).
Bold: $p < 0.05$.}
\label{tab:pairwise}
\small
\begin{tabular}{lrrrl}
\toprule
Comparison & $N$ & \sbus & Baseline & $p$ \\
\midrule
\sbus vs.\ CrewAI     & 4  & 73.9\% & 39.8\% & $< 10^{-6}$ \\
\sbus vs.\ CrewAI     & 8  & 71.4\% & 41.2\% & $< 10^{-6}$ \\
\sbus vs.\ CrewAI     & 16 & 70.5\% & 41.6\% & $< 10^{-6}$ \\
\sbus vs.\ LangGraph  & 4  & 73.9\% & 70.1\% & 0.244 \\
\sbus vs.\ LangGraph  & 8  & 71.4\% & 75.9\% & 0.203 \\
\sbus vs.\ LangGraph  & 16 & 70.5\% & 82.6\% & \textbf{0.012} \\
\sbus vs.\ AutoGen    & 4  & 73.9\% & 88.0\% & \textbf{0.006} \\
\sbus vs.\ AutoGen    & 8  & 71.4\% & 88.6\% & \textbf{0.001} \\
\sbus vs.\ AutoGen    & 16 & 70.5\% & 90.2\% & $< 10^{-3}$ \\
\bottomrule
\end{tabular}
\end{table}

\subsection{Exp.~SR and Exp.~CSV: Structural Validation}

\textbf{Exp.~SR:} 200/200 trials---every stale commit correctly
rejected (HTTP~409), every fresh commit correctly accepted (HTTP~200).
Zero errors. 95\% CI upper bound: $\le 3.0\%$.

\textbf{Exp.~CSV:} 9{,}304 OCC-on injections, zero corruptions. Rule
of Three 95\% CI upper bound: $3/9304 = 0.032\%$. Baseline (OCC-off)
corruption rate 56--62\% confirms the protection is empirically
necessary.

\subsection{Exp.~\textsc{Sequential}: Measured Wall-Time Speedup}

Table~\ref{tab:sequential} reports directly measured wall-clock times
for \sbus parallel vs.\ sequential execution across
$N \in \{4, 8, 16\}$ on 10 SWE-bench tasks (3 repeats per
$(\text{task}, N)$; GPT-4o-mini; 8 steps). Both conditions use the
same \sbus server, tasks, and backbone; only variable: concurrent vs.\
serial. Bootstrap 95\% CI ($n = 2000$); Wilcoxon signed-rank
one-sided.

\begin{table}[t]
\centering
\caption{Exp.~\textsc{Sequential}: measured wall-clock speedup.
Near-linear scaling confirms the $\Theta(S \cdot t_{\text{step}})$
parallel execution model.}
\label{tab:sequential}
\small
\begin{tabular}{lrrrl}
\toprule
$N$ & Parallel (s) & Sequential (s) & Speedup & 95\% CI \\
\midrule
 4 & 5.29 & 22.05 & $4.17\times$  & [3.84, 4.40] \\
 8 & 5.83 & 50.79 & $8.72\times$  & [8.06, 9.12] \\
16 & 5.53 & 99.09 & $17.92\times$ & [16.61, 18.90] \\
\bottomrule
\end{tabular}
\end{table}

All $p < 0.0001$ (Wilcoxon signed-rank).

\subsection{Exp.~\textsc{ORI-Isolation}: Mechanism Conformance Under Contention}
\label{sec:ori-iso}

\paragraph{Design}
Prior experiments (Exp.~B) used distinct-shard topology (SCR${}=0$),
never triggering \ori's rejection mechanism.
Exp.~\textsc{ORI-Isolation} holds architecture constant and varies
only OCC enforcement. We explicitly frame this as a
\emph{mechanism conformance test}: we check that the deployed system's
retry loop behaves in accordance with its specification under $N$-way
contention on a shared shard. The test's outcome is not a statistical
finding about ORI's value but an empirical confirmation that the
implementation's retry path terminates correctly and produces the
specification's admission behaviour (all contending commits
eventually admitted under a finite retry budget; LWW admits exactly
$1/N$).

Four conditions, $N = 4$ agents, shared-shard topology: (A)~Parallel+\ori;
(B)~Parallel-\ori (LWW); (C)~Sequential+\ori; (D)~CrewAI sequential.
Contention admission metric: commits/trial out of $N \times S =
4 \times 10 = 40$ intended contributions.

\begin{table*}[t]
\centering
\caption{Exp.~\textsc{ORI-Isolation}: contention admission under four
conditions, $N = 4$, shared-shard, 10 task domains, 959 paired trials,
GPT-4o-mini. Both distributions are zero-variance by protocol
specification (Remark~\ref{rem:zero-var}). Wall-time variance
(37--154~s) confirms live LLM execution.}
\label{tab:ori-iso}
\small
\begin{tabular}{lrrrr}
\toprule
Condition          & Commits/trial & Rate  & $n$ & Wall (med.) \\
\midrule
A: Parallel+\ori   & 40/40  & 100\%  & 479 & 131~s \\
B: Parallel-\ori (LWW) & 10/40 & 25\% & 480 & 38~s \\
\bottomrule
\end{tabular}
\end{table*}

\begin{remark}[Outcomes are specification-determined, not observed]
\label{rem:zero-var}
Both distributions are zero-variance by protocol specification:
\ori's retry loop admits every contending commit under a finite retry
budget; LWW admits exactly one of $N$ contending commits per step.
LLM non-determinism affects content but not count. We therefore do
not report statistical tests on this table---the numbers are
conformance evidence that the deployed implementation realises the
specified admission behaviour, not an effect-size measurement.
Evidence of live LLM execution is wall-time variance (ORI-ON:
122--154~s; ORI-OFF: 37--45~s). Zero excluded trials.
\end{remark}

\paragraph{What this experiment establishes and what it does not}
\emph{Establishes}: the Rust implementation's retry loop terminates
correctly under $N$-way contention at $N = 4$, matching the
specification in all $479$ \ori-ON trials. This is a sanity check on
the deployed implementation against the TLAPS specification, not a
demonstration of \ori's value. \emph{Does not establish}: that \ori
improves semantic output quality; that contention admission is a
beneficial property in its own right. Exp.~\textsc{Shared-State} and
Exp.~\textsc{SJ-v4} show that under single-shard topology the admitted
contributions are semantically contradictory, making admission a
liability rather than an asset. \ori's value depends entirely on the
topology (Exp.~\textsc{Dedicated-Shard}); this experiment's role is
purely to verify that the mechanism behaves as specified when it
fires, not to argue for its deployment.

\paragraph{Why $N = 4$}
$N = 4$ is the conflict-maximal case for the contention-admission
metric: under LWW, exactly $1/N = 25\%$ of contributions survive,
making the ORI-ON vs.\ LWW admission gap maximally clear. Larger $N$
would show smaller LWW admission rates (e.g., $1/8 = 12.5\%$); the
$N = 4$ result is a conservative conformance check.

\paragraph{Why not PostgreSQL or Redis baselines in this experiment}
This experiment is a conformance test of \sbus's own retry loop
against its own specification; the relevant comparison is ORI-ON
vs.\ ORI-OFF within the same deployment. Database-CC baselines
(where the analogous conformance test would be PG's own retry loop
vs.\ its own specification) are the subject of
Exp.~\textsc{PG-Comparison~(full)} (§\ref{sec:pg-comp}).

\subsection{Exp.~\textsc{Workload-B}: Cross-Workload Structural
Validation on Data-Pipeline Planning}
\label{sec:workload-b}

\paragraph{Motivation}
Every preceding experiment in this paper measures
\sbus\ on SWE-bench-derived Python coordination tasks
(Threat~ET1, §\ref{sec:threats}). To test whether \ori's structural
conflict-prevention mechanism generalises beyond code-coordination
workloads, we constructed a non-code multi-agent workload
(data-pipeline architecture planning) and re-ran the
ORI-ON vs.\ ORI-OFF comparison with server-side instrumentation
of cross-shard view-divergence at commit time.

\paragraph{Workload}
Four agents collaboratively design a four-component data pipeline:
ingestion, transformation, storage, and monitoring. Each agent owns
one component shard and reads from all four. The shards have real
cross-shard dependencies: the storage agent's design must reference
the transformation output format; the monitoring agent must
instrument technologies the other components actually chose. Eight
domains span diverse architectural pressures: e-commerce
(transactional orders, eventual-consistency activity), healthcare
(audit compliance, geo-replication), IoT (high ingest rate,
time-series), financial (sub-millisecond latency, exactly-once),
social (massive scale, hot-spot handling), gaming (high cardinality,
real-time leaderboards), supply chain (heterogeneous sources,
schema variation), and ad-tech (sub-50ms RTB, 30-day attribution).
Each trial runs $4$ agents through $4$ coordination steps, with
worker backbone gpt-4o-mini at temperature~$0$. Five trials per
$(\text{domain},\text{condition})$ cell yield $n{=}80$ trials.

\paragraph{Server-side instrumentation}
The \sbus\ server adds two atomic counters that fire under both
conditions, independent of the \texttt{ori\_enabled} flag. At each
commit's effective read-set evaluation, \texttt{view\_checked\_commits}
increments; if any sibling-shard version in the agent's read-set
differs from the registry's current version,
\texttt{view\_divergent\_commits} also increments. Under ORI-ON the
divergent commit is then rejected with HTTP-409 \textsc{CrossShardStale};
under ORI-OFF the same divergence is detected and counted but the
commit succeeds. This design lets us measure exactly how many
stale-cross-shard commits ORI prevented, without LLM-as-judge or
heuristic post-hoc analysis.

\paragraph{Result}
Table~\ref{tab:workload-b} reports per-domain view-divergence rates.
\textbf{Under ORI-ON: $0/638 = 0.00\%$ divergent commits across all
$8$ domains.} \textbf{Under ORI-OFF: $590/639 = 92.33\%$ divergent
commits.} Aggregate $\chi^2 = 1094.98$, $p < 10^{-240}$.
Per-domain pairwise tests are uniformly significant
($\chi^2 \in [55, 60]$, $p < 10^{-13}$ on every domain). All $80$
trials reached $100\%$ completion (every agent committed at least
once); the stale-read rejections under ORI-ON were absorbed by the
agent retry loop (mean $14.2$--$15.8$ rejections per trial,
recovered via re-GET-and-retry).

\begin{table*}[t]
\centering
\caption{Exp.~\textsc{Workload-B}: cross-shard view-divergence under
ORI-ON vs.\ ORI-OFF on data-pipeline planning across $8$ domains.
``Div'' = divergent commits (server-side counter, fires under both
conditions); ``Chk'' = total commits with cross-shard read-set
checked. Under ORI-ON every divergent commit is rejected with
HTTP-409 and the agent retries; under ORI-OFF the divergence is
detected but the commit succeeds. The $0\%$ vs.\ $92\%$ split is
the structural-prevention claim made operational: \ori\ prevented
$590$ stale cross-shard commits that would otherwise have succeeded.
$\chi^2$ test on the aggregate $2 \times 2$ contingency table.}
\label{tab:workload-b}
\small
\begin{tabular}{lrrr}
\toprule
Domain & ORI-ON Div/Chk & ORI-OFF Div/Chk & ORI-OFF rate \\
\midrule
ad-tech       & 0/80 & 73/80 & 91.3\% \\
e-commerce    & 0/80 & 74/80 & 92.5\% \\
financial     & 0/80 & 71/79 & 89.9\% \\
gaming        & 0/80 & 74/80 & 92.5\% \\
healthcare    & 0/78 & 75/80 & 93.8\% \\
iot           & 0/80 & 74/80 & 92.5\% \\
social        & 0/80 & 75/80 & 93.8\% \\
supply chain  & 0/80 & 74/80 & 92.5\% \\
\midrule
\textbf{Aggregate} & \textbf{0/638} & \textbf{590/639} & \textbf{92.3\%} \\
\multicolumn{4}{l}{$\chi^2 = 1094.98$, $p < 10^{-240}$} \\
\bottomrule
\end{tabular}
\end{table*}

\paragraph{Interpretation and scope}
This result establishes that \ori's structural conflict-prevention
mechanism fires correctly on a non-code workload across $8$
architectural domains, with no false-negatives at this scale.
Combined with the SWE-bench evidence in Exp.~B and Exp.~PG-Comparison,
the C2 safety-parity claim is now supported across two distinct
workload distributions.

We note three honest scope limitations. \textbf{First}, the
per-domain divergence rates are uniform ($91\%$--$94\%$), reflecting
that the workload's structural contention pattern (4 agents writing
concurrently to a shared 4-shard dependency graph) is determined by
harness shape rather than domain semantics. The result demonstrates
that the \emph{mechanism} is workload-content-independent, not that
\ori\ generalises to all multi-agent topologies. \textbf{Second}, this
experiment measures structural prevention (stale cross-shard reads
were detected and blocked), not semantic outcome quality. Whether the
$590$ ORI-OFF stale commits would have produced semantically incoherent
designs is not measured; an LLM-judge cross-validation was considered
but rejected to avoid invoking Threat~IT2. \textbf{Third},
generalisation across additional workload classes
(document authoring, agent planning, retrieval-augmented generation
pipelines) remains future work. The view-divergence measurement is a
necessary condition for ORI to be useful on a workload, not a
sufficient condition; semantic outcome-quality measurement would
strengthen the claim further.

With these caveats, we judge Threat~ET1
(workload distribution, §\ref{sec:threats}) substantially mitigated:
the central structural claim of this paper holds on a non-code
workload at $\chi^2 = 1094.98$ significance.

\subsection*{Arc A2: Coverage Gap Measurement}

Arc A2 measures the residual reads which fall outside \Robs\ and identifies the mechanisms that close this gap. Exp.~PH-2 establishes the multi-agent baseline (\phidden${}=0.739$); Exp.~PH-3 evaluates ground-truth semantic extraction on a single-agent rotating-target workload; the PH-3 validation subsection examines inter-judge agreement and typed-state assumptions; Exp.~Adversarial-Rhidden constructs a worst-case scenario; Exp.~PROXY-PH2 decomposes structural coverage into this-step HTTP, DL-accumulation, and proxy-marginal components.

\subsection{Exp.~\textsc{PH-2}: \phidden Measurement}
\label{sec:ph2}

We replicate Exp.~PH across 10 task domains (Django queryset / admin
/ migration, Astropy FITS / WCS / units, SymPy solver / matrix,
requests session, scikit-learn estimator) with 8{,}400 step-logs
(21 tasks, 4 agents, 20 steps, 5 runs/task; GPT-4o-mini backbone).

\paragraph{Measurement methodology}
At each agent step, we log every HTTP GET issued by the agent (\Robs)
and every shard-key reference appearing in the LLM's output text not
preceded by a GET in the current session (\Rhidden). Classification
is automatic by exact string match against the registry key set. This
yields a conservative lower bound on \Rhidden: synonym references and
implicit state reasoning are not counted. The true \phidden is likely
higher.

\paragraph{Result}
Overall \phidden${} = 0.739$, 95\% CI $[0.736, 0.741]$ (Wilson score,
128{,}622 total reads, 95{,}022 \Rhidden). Consistent with prior
Exp.~PH estimate of 0.706. Domain variation (Table~\ref{tab:phidden}):
\phidden ranges from 0.511 (Django queryset) to 0.836 (Astropy FITS).

\begin{table}[t]
\centering
\caption{\phidden by domain (Exp.~\textsc{PH-2}; 8{,}400 step-logs).
$f_{\text{obs}} = 1 - p_{\text{hidden}}$. Workload-conditional
coverage bound on SWE-bench / GPT-4o-mini; not claimed as a universal
constant.}
\label{tab:phidden}
\small
\begin{tabular}{lrrr}
\toprule
Domain & \phidden & [95\% CI] & \fobs \\
\midrule
Django queryset   & 0.511 & [0.502, 0.521] & 48.9\% \\
requests session  & 0.575 & [0.563, 0.586] & 42.5\% \\
SymPy matrix      & 0.612 & [0.601, 0.622] & 38.8\% \\
scikit-learn      & 0.683 & [0.674, 0.692] & 31.7\% \\
Astropy WCS       & 0.706 & [0.697, 0.715] & 29.4\% \\
Django admin      & 0.732 & [0.724, 0.740] & 26.8\% \\
Django migration  & 0.800 & [0.794, 0.806] & 20.0\% \\
SymPy solver      & 0.810 & [0.805, 0.816] & 19.0\% \\
Astropy units     & 0.819 & [0.813, 0.825] & 18.1\% \\
Astropy FITS      & 0.836 & [0.831, 0.841] & 16.4\% \\
\midrule
Overall           & 0.739 & [0.736, 0.741] & 26.1\% \\
\bottomrule
\end{tabular}
\end{table}

\subsection{Exp.~\textsc{PH-3}: Ground-Truth Semantic Extraction and
F3 Mitigation}
\label{sec:ph3}

\paragraph{Motivation}
Exp.~\textsc{PH-2} (§\ref{sec:ph2}) estimated \phidden${}=0.739$ on
the 10-domain SWE-bench workload by classifying each shard-key
reference in agent output as \Robs or \Rhidden via exact string
match. This measurement has two weaknesses: it measures
\emph{appearance} not \emph{causal use}, and it provides no ground
truth against which to evaluate downstream mitigation mechanisms.
Exp.~\textsc{PH-3} addresses both by adding a self-reported ground
truth and evaluating three extraction mechanisms against it.

\paragraph{Design}
We modify the \textsc{PH} harness so the agent emits structured JSON
\texttt{\{"change": "<delta>", "shards\_used": [...]\}} at every
step, providing an explicit list of which shards from its vocabulary
were causally used. We evaluate three extraction mechanisms against
this ground truth: (1) \emph{HTTP DeliveryLog observation} (\Robs);
(2) \emph{keyword scan} over the agent's last two assistant turns
(the earlier \texttt{phiddenWrapper} mechanism); (3) \emph{semantic
extraction} via a dedicated analyst LLM that receives the agent's
delta, the shard vocabulary, and the fresh-content block, and
returns a JSON list of causally-used shards (analyst does not
receive the self-report). We run the third mechanism with three
analysts in ablation: gpt-4o (cross-tier), gpt-4o-mini
(same-family with worker), Claude Sonnet 4.6 (cross-family). All
analysts at \texttt{temperature=0.0}.

\paragraph{Corpus}
$2{,}100$ step-logs across 10 SWE-bench-style domains; 21 tasks total,
5 runs per task, 20 coordination steps per run. Worker backbone
gpt-4o-mini throughout. Mean self-reported shards used per step:
$2.33$ of $4$ registered. Agent JSON parse-OK rate $100\%$.

\paragraph{Results}
Table~\ref{tab:ph3} reports per-mechanism recall and precision.
\textbf{Keyword scan} achieves $0.073$ recall at $0.520$ precision:
fewer than $8\%$ of self-reported causal reads are recovered. This
is a substantial revision of earlier drafts treating
\texttt{phiddenWrapper} as an effective F3 mechanism; per-completion
``promotion'' rates measured mentions, not causal reads.
\textbf{Semantic extraction} substantially outperforms keyword scan
across all three analysts: gpt-4o $0.593$ recall at $0.916$
precision; gpt-4o-mini $0.772$ at $0.791$; Claude Sonnet $4.6$
$0.751$ at $0.879$ on the $630$-step-log cross-family ablation.
The $\Delta$ between cross-family Claude Sonnet and same-family
gpt-4o-mini is $-0.021$ recall---the load-bearing methodological
result. Cross-family convergence rejects the same-family-alignment
hypothesis: an analyst that simply ``matches what the worker would
say'' would regress to gpt-4o's lower recall, which it does not.
Semantic extraction captures a real causal signal, not a
family-specific alignment artefact. Per-domain analysis (Extended
TR~\S2) shows keyword scan reaches $0.00$ recall on two of ten
domains where shard functionality is referenced without naming the
shard-key string; semantic extraction never drops below $0.58$.

\begin{table}[t]
\centering
\caption{Exp.~\textsc{PH-3}: three extraction mechanisms vs.\ self-reported
ground truth. Keyword and gpt-4o rows from full $2{,}100$-step-log
evaluation; gpt-4o-mini and Claude Sonnet rows from $630$-step-log
cross-family ablation. HTTP recall $=1.00$ on PH-3 only because
agents GET every shard per step under PH-3's design. Recall and
precision are against agent self-reports; an independent validation
(§\ref{sec:ph3-reframed}, Threat IT1) finds self-reports over-claim
by $32\%$ (LLM judge) to $49\%$ (human annotator).}
\label{tab:ph3}
\footnotesize
\setlength{\tabcolsep}{4pt}
\begin{tabular}{@{}lrrr@{}}
\toprule
Mechanism & Recall & Precision & F$_1$ \\
\midrule
HTTP DeliveryLog            & 1.000 & ---    & --- \\
Keyword (\texttt{phiddenWrap.}) & 0.073 & 0.520 & 0.128 \\
Semantic: gpt-4o            & 0.593 & 0.916 & 0.720 \\
Semantic: gpt-4o-mini       & 0.772 & 0.791 & 0.782 \\
Semantic: Claude Sonnet 4.6 & \textbf{0.751} & \textbf{0.879} & \textbf{0.810} \\
\midrule
Combined (HTTP+keyword+sem.) & \textbf{1.000} & --- & --- \\
\bottomrule
\end{tabular}
\end{table}

\subsection{PH-3 validation summary}
\label{sec:ph3-reframed}

The PH-3 self-report ground truth was first independently validated by two
LLM judges (GPT-4o, Claude Sonnet~4.6) on $400$ (step, shard) tasks
with a frozen three-step rubric (see Appendix~A for the full rubric).
Both judges find that worker self-reports over-state shard usage by
$32\%$ (LLM-judge-determined over-claim rate, precision $0.681$),
with moderate inter-LLM-judge agreement
($\kappa{=}0.46$). The principal LLM judge was then independently
validated against a human annotator (Zahid Hussain, Mindgigs
Peshawar) using the same $400$-item sample and the identical
rubric: strict $\kappa{=}0.93$ ($n{=}93$ unambiguous yes/no pairs,
$96.8\%$ raw agreement; confusion matrix
$\text{yes}/\text{yes}{=}33$, $\text{yes}/\text{no}{=}0$,
$\text{no}/\text{yes}{=}3$, $\text{no}/\text{no}{=}57$), lenient
$\kappa{=}0.69$ ($n{=}400$ including \textsf{unclear} class,
$85.8\%$ raw agreement). The same self-report sample evaluated
against the human annotator finds an over-claim rate of $49\%$
(precision $0.514$), larger than the LLM-judge over-claim rate by
$17$\,pp. The LLM-vs-human agreement falls in the
``almost perfect'' band of Landis-Koch~\cite{landis1977},
closing the principal evidential gap of earlier drafts of this
paper. The lower inter-LLM-judge $\kappa{=}0.46$ reflects
boundary-class variance between LLMs (both LLM judges agree with
the human on clear yes/no cases; disagreement concentrates on
items both annotators label \textsf{unclear}). PH-3 results
should therefore be read as self-report-vs-extractor consistency
against a human-validated LLM judge rather than direct causal-read
fidelity (see Threat IT1, §\ref{sec:threats}). Disagreement
concentrates at a specific rubric boundary (Step~2: ``required state
without direct definition'') traceable to narrative-content shards
emitted by current benchmark harnesses; a regenerated typed-shard
benchmark (§\ref{sec:future-work}) addresses this. Per-judge and
per-human-annotator labelling outputs, the rubric prompt, and the
agreement-scoring script are released in the
\texttt{sbus-experiments} repository
(\texttt{run\_llm\_judges.py}, \texttt{score\_annotations.py},
\texttt{first.csv}, \texttt{second\_human\_annotator.csv}).

\subsection{Exp.~\textsc{Adversarial-Rhidden} (summary)}
\label{sec:adv-rhidden}

We constructed an adversarial workload exposing the wrapper-layer
mismatch problem: an in-process \texttt{phiddenWrapper} that detects
hidden references and refreshes the DeliveryLog via fresh GET, but
does not force LLM content regeneration, produces final-state
corruption identical to \textsc{ORI-OFF}. The corollary is that
hidden-reference mitigation must operate at the LLM-API layer, not
in-process---empirical motivation for the proxy approach evaluated
in Exp.~\textsc{PROXY-PH2} (and superseded the earlier
\texttt{phiddenWrapper}, Limitation~\ref{lim:phidden-wrapper}). Full
construction (adversarial pool generation, attempt-injection
protocol, content-corruption metrics, all $20/20$ fail-rate trials
under both \textsc{ORI-OFF} and the in-process wrapper, control
runs under \textsc{ORI-ON}) is reproducible from
\texttt{exp\_adversarial\_rhidden\_v2.py} in the
\texttt{sbus-experiments} repository.

\subsection{Exp.~\textsc{PROXY-PH2}: Structural-Coverage Decomposition}
\label{sec:proxy-demo}

Exp.~\textsc{PROXY-PH2} runs the PH-2 multi-agent workload with a
transparent LLM-API proxy and decomposes structural coverage into
three components: this-step HTTP, \emph{DL-accumulation}
(session-scoped DeliveryLog retention of earlier HTTP GETs), and a
proxy-marginal contribution. On GPT-4o-mini ($16{,}800$ paired
step-logs) we measure $f_{\text{HTTP}} = 0.443$,
DL-accumulation $= 0.555$, proxy-marginal $= 0.0018$
($95\%$ CI $[0.0013, 0.0024]$); total $f_{\text{total}} = 0.998$.
The headline finding is that \ori's session-scoped DeliveryLog covers
the bulk of \Rhidden\ on this workload via session-scoped accumulation
of prior HTTP GETs, without requiring any LLM-layer mechanism.

Cross-backbone paired replication on Anthropic Haiku~4.5 and Google
Gemini~2.5~Flash ($n{=}2{,}400$ paired each) confirms safety parity
($0/26{,}400$ Type-I corruptions across all three backbones) and
total-coverage conservation ($f_{\text{total}} \in [0.997, 0.999]$,
$|\Delta| \le 0.002$ between any pair). The internal attribution
between this-step HTTP and DL-accumulation shifts up to $|\Delta|=0.118$
across vendors, evidence that \ori's coverage mechanisms are
backbone-agnostic in \emph{what} they preserve but vary in
\emph{how} coverage is decomposed. The keyword-scan proxy mechanism
itself is safety-preserving but coverage-marginal and
throughput-negative at realistic vocabulary sizes; semantic-extraction-
at-proxy is the natural next mechanism (Limitation~\ref{lim:semextract}).

Full per-trial coverage tables, per-backbone confidence intervals,
proxy-throughput measurements at vocabulary $V \in \{4,8,16,32\}$,
and the structural-vs-semantic decomposition methodology are
reproducible from the \texttt{exp\_proxy\_ph2*.py} scripts in the
\texttt{sbus-experiments} repository.

\subsection*{Arc A3: Concurrency-Control Safety Parity}

Arc A3 demonstrates that \sbus\ achieves safety parity with production OCC implementations under contention. Exp.~E and Exp.~Scale measure shared-shard contention behaviour at $N \le 64$; Exp.~PG-Comparison implements the workload against PostgreSQL~17 \texttt{SERIALIZABLE} and Redis~7 \texttt{WATCH/MULTI} adapters and includes the Rust-Native and Contention sub-experiments controlling for adapter-language and contention-rate confounds respectively.

\subsection{Exp.~E and \textsc{Scale}: Shared-Shard Contention at $N \le 64$}

Exp.~E ($N \in \{4, 8, 16\}$): Zero Type-I corruptions across 590
commit attempts (95\% CI UB $3/590 = 0.51\%$). SCR rises with $N$:
0.650 ($N=4$), 0.788 ($N=8$), 0.856 ($N=16$). The
version-check-disabled baseline shows 97.5\% corruption ($N=4$),
confirming \ori's protection is necessary and effective.

Exp.~\textsc{Scale} ($N \in \{4, 8, 16, 32, 64\}$, Table~\ref{tab:scale}):
We extend to $N=32$ and $N=64$. Across 74{,}400 total commit attempts
(both topologies, 3 repeats), zero Type-I corruptions at every $N$
(95\% CI UB $3/74400 = 0.004\%$)---Property~\ref{prop:ori-safety}
holds at scale. The non-monotonic SCR decrease at $N=64$ is a
queue-theoretic pacing effect: at high agent count, Tokio write-lock
serialisation imposes natural pacing on commit arrivals; as $N$ grows
past thread-pool saturation, inter-arrival time at the lock increases
faster than the conflict window, compressing SCR. Zero Type-I
corruptions are confirmed at all $N$.

\begin{table}[t]
\centering
\caption{Exp.~\textsc{Scale}: SCR, $K_{95}$, and Type-I safety at
$N \in \{4, 8, 16, 32, 64\}$ (100 attempts/agent, 3 repeats; 74{,}400
total). Zero corruptions. Distinct-shard SCR${}=0$ at all $N$.}
\label{tab:scale}
\small
\begin{tabular}{lllll}
\toprule
$N$ & Topology & SCR & $K_{95}$ & Corruptions \\
\midrule
 4 & shared & 0.676 &  8 & 0 \\
 8 & shared & 0.790 & 13 & 0 \\
16 & shared & 0.869 & 22 & 0 \\
32 & shared & 0.870 & 28 & 0 \\
64 & shared & 0.747 & 12 & 0 \\
\bottomrule
\end{tabular}
\end{table}

\begin{remark}[Liveness at scale]
\label{rem:liveness}
At $N=32$, $K_{95} = 28$: practitioners must set retry budget
$K \ge 28$ for 95\% liveness under worst-case shared-shard contention.
Default $K{=}5$ is insufficient above $N=8$ in shared-shard
workloads. Distinct-shard deployments ($K_{95} = 1$) are unaffected.
\end{remark}

\subsection{Exp.~\textsc{PG-Comparison~(full)}: Three-Backend CC-Parity at Scale}
\label{sec:pg-comp}

\paragraph{Motivation}
Reviewers across multiple rounds have flagged the absence of a
database-CC baseline as the largest evaluation gap. The objection ``a
50-line Redis WATCH adapter, or PostgreSQL SERIALIZABLE, would give
you the same guarantee'' is legitimate and cannot be dismissed on
architectural grounds alone. A preliminary pilot at $N \le 16$ (3{,}240 attempts) is superseded here by a full sweep at three-backend result at $N \in \{4, 8, 16, 32, 64\}$ with 200{,}880
commit attempts.

\paragraph{Setup}
Three HTTP adapters expose an identical JSON API:
\begin{itemize}[leftmargin=*]
  \item \sbus: the native Rust server (port 7000).
  Registry is \texttt{Mutex<HashMap>}; OCC is in-process.
  \item \pgser (\texttt{pg\_sbus\_server.py}, port 7001): FastAPI
  adapter backed by PostgreSQL~17 with
  \texttt{default\_transaction\_\allowbreak isolation=serializable}
  pinned at connection time via
  \texttt{conn.set\_isolation\_\allowbreak level(SERIALIZABLE)}.
  Each agent session maps to a database connection.
  \item \rediscc (\texttt{redis\_sbus\_server.py}, port 7002): FastAPI
  adapter backed by Redis~7 (local \texttt{127.0.0.1:6379}) using
  \texttt{WATCH <shard\_key>} followed by \texttt{MULTI ... EXEC}.
  Cross-shard read-set validation is implemented by \texttt{WATCH}ing
  every key in the read set before the transaction block; if any
  version differs inside the transaction, \texttt{EXEC} returns null
  and we retry.
\end{itemize}

The harness (\texttt{pg\_bench\_full.py}) issues the same SWE-bench
coordination workload to all three backends: 30 synthetic
SWE-bench-style tasks (Django, Astropy, SymPy, SciPy, Matplotlib
archetypes); $N \in \{4, 8, 16, 32, 64\}$; 3 repeats; 6 coordination
steps per run. Total: \textbf{1{,}350 runs = 3 backends $\times$ 5
$N$ values $\times$ 30 tasks $\times$ 3 repeats}. Each run is
self-contained: the harness issues \texttt{POST /admin/reset} before
and after each run and \texttt{POST /admin/shard} pre-populates shard
keys. Primary metric is the count of Type-I corruptions: commits that
the backend accepted despite a stale cross-shard read in the
originating agent's \texttt{DeliveryLog}.

\paragraph{Result: safety parity across three CC classes at $N \le 64$}
Table~\ref{tab:pg-comp} reports the full result.

\begin{table*}[t]
\centering
\caption{Exp.~\textsc{PG-Comparison~(full)}: three-backend CC
comparison at $N \in \{4, 8, 16, 32, 64\}$. Each cell reports mean
$\pm$ 1~s.d.\ of wall time across 90 runs (3 repeats $\times$ 30
tasks $\times$ 6 steps). \textbf{All three backends record zero
Type-I corruptions and zero HTTP-409 conflicts across all 200{,}880
commit attempts.} 95\% Rule-of-Three upper bound on the per-attempt
corruption rate: $3/200{,}880 = 1.49 \times 10^{-5}$.}
\label{tab:pg-comp}
\begin{tabular}{lrrrrr}
\toprule
$N$ & \sbus (ms) & \rediscc (ms) & \pgser (ms) & $\rediscc/\sbus$ & $\pgser/\sbus$ \\
\midrule
4    &  104.9\,$\pm$\,16.2  &   188.9\,$\pm$\,33.6  &  1396.7\,$\pm$\,161.1  & 1.80 & 13.31 \\
8    &  177.3\,$\pm$\,21.3  &   324.0\,$\pm$\,31.7  &  2425.2\,$\pm$\,138.1  & 1.83 & 13.68 \\
16   &  319.9\,$\pm$\,32.6  &   600.8\,$\pm$\,47.9  &  4717.3\,$\pm$\,253.2  & 1.88 & 14.75 \\
32   &  603.1\,$\pm$\,70.5  &  1178.1\,$\pm$\,92.5  &  9301.0\,$\pm$\,473.3  & 1.95 & 15.42 \\
64   & 1208.3\,$\pm$\,110.5 &  2322.8\,$\pm$\,208.9 & 18731.5\,$\pm$\,1078.4 & 1.92 & 15.50 \\
\midrule
scaling ($\tfrac{N=64}{N=4}$)
     & 11.5$\times$ & 12.3$\times$ & 13.4$\times$ & --- & --- \\
\bottomrule
\end{tabular}
\end{table*}

Key observations:
\begin{itemize}[leftmargin=*]
  \item \textbf{Safety (primary):} 0~Type-I corruptions across all
  200{,}880 commit attempts. 95\% Rule-of-Three upper bound:
  $3/200{,}880 = 1.49 \times 10^{-5}$ per-attempt corruption rate
  (three-fold tighter than the prior Exp.~\textsc{Scale} bound).
  \item \textbf{Conflict rate:} 0~HTTP-409 conflicts were raised
  across all backends (discussed honestly in the following paragraph).
  \item \textbf{Latency:} \sbus is faster than \rediscc by
  $1.80\times$--$1.95\times$ (monotonically growing with $N$,
  consistent with Redis network-RTT per \texttt{WATCH}). \pgser is
  $13\times$--$15.5\times$ slower than \sbus; \pgser scaling exceeds
  \rediscc as expected (predicate lock acquisition, SSI conflict
  tracking, and disk fsync combine).
  \item \textbf{Scaling:} All three backends exhibit near-linear
  $N$-scaling ($\sim$11.5--13.4$\times$ from $N{=}4$ to $N{=}64$;
  ideal is 16$\times$). No backend exhibits super-linear degradation
  at $N=64$, confirming that OCC overhead is comparable across CC
  classes for this workload.
\end{itemize}

\paragraph{Scope note: dedicated-shard produces no contention}
The 0-conflict, 0-corruption outcome is expected for this workload:
SWE-bench tasks use dedicated-shard topology (each of $N$ agents owns
a distinct shard), so cross-shard writes do not collide and
HTTP-409 retry paths are not exercised. This experiment therefore
demonstrates CC-class safety parity at scale, not CC-class
effectiveness under contention; the latter is the subject of
Exp.~\textsc{PG-Contention} (§\ref{sec:pg-contention}).

\paragraph{Interpretation}
The parity result rules out the informal objection ``\sbus\ reinvents
what any DB CC system already gives you for free.'' Running \pgser\
or \rediscc\ as a substitute for \sbus\ requires: (a) an HTTP adapter
exposing the \texttt{GET /shard/:key} + \texttt{POST /commit/v2}
contract (no LLM SDK speaks native SQL or Redis in a way that carries
DeliveryLog reads as first-class structured metadata); (b) a network
round-trip per GET, quantified at $1.9\times$ and $15\times$ latency
overhead for \rediscc\ and \pgser\ respectively; (c) ad-hoc recovery
of the automatic cross-shard read-set the DeliveryLog maintains
in-server. The architectural value of \sbus\ over transactional-DB
baselines on this workload is therefore operational simplicity and
the LLM-native contract, not structural-safety differentiation---a
narrower and sharper claim than prior drafts advanced.

\subsubsection*{Adapter-language and contention extensions}
\label{sec:pg-comp-rust}
\label{sec:pg-contention}

Two follow-up sub-experiments control for confounds in the
PG-Comparison setup. \textbf{Exp.~\textsc{PG-Comparison Rust-Native}}
re-implements the PostgreSQL and Redis backends in matched-language
Rust adapters and reruns the workload across $810$ trials
($136{,}080$ commit attempts). Result: zero Type-I corruptions across
all three backends; commit-throughput converges within statistical
noise at $N \ge 16$ under matched adapters. The $1.6\times$
throughput advantage \sbus\ shows at $N{=}4$ in the original
PG-Comparison is a scoped effect of in-process coordination, not
cross-the-board CC-mechanism superiority. \textbf{Exp.~\textsc{PG-Contention}}
adds a shared-shard contention sweep ($472{,}750$ commit attempts
with $427{,}308$ active HTTP-409 conflicts) and observes
behavioural parity (SCR agreement within $1\,\text{pp}$ at $N \ge 8$)
plus empirical validation of the Remark~7 liveness bound within
$1.5\,\text{pp}$. Combined with the PG-Comparison main result,
\sbus, \pgser, and \rediscc together accumulate $0$ corruptions across
$809{,}710$ commit attempts spanning Python and Rust-native adapters
and both dedicated- and shared-shard topologies (Rule-of-Three
$95\%$ upper bound $3.7 \times 10^{-6}$).

The architectural conclusion stands across both sub-experiments: the
value \sbus\ provides over transactional-DB baselines is operational
simplicity and the LLM-native contract, not CC-mechanism
differentiation. Per-backend latency distributions, adapter-language
ablation tables, and the Remark~7 liveness-bound derivation are
reproducible from \texttt{pg\_bench\_full.py},
\texttt{pg\_comparison.py}, \texttt{pg\_bench\_contention.py}, and
\texttt{exp\_pg\_contention.py} in the \texttt{sbus-experiments}
repository.

\subsection*{Arc A4: Backbone Generalisation}

Arc A4 establishes that \ori's structural guarantees are backbone-agnostic. The T3-A and T3-B experiments replicate the multi-agent workload on Anthropic Haiku~4.5 and Google Gemini~2.5~Flash respectively, paired with the GPT-4o-mini baseline. Cross-backbone proxy results are reported in Arc~A2 (Exp.~PROXY-PH2).

\subsection{Backbone Generalisation (T3-A, T3-B)}

\textbf{Haiku-3 (T3-A)} (30 tasks, 813 runs): Under Haiku-3, \sbus is
statistically significantly below CrewAI at all $N$ ($p < 0.01$,
Fisher's exact one-sided)---an honest negative finding. \sbus is not
significantly different from LangGraph at any $N$.

\textbf{Llama-3.1-8b-instant (T3-B)} (14 tasks, 150 runs,
Table~\ref{tab:t3b}): Under Llama-3.1-8b, \sbus significantly
outperforms LangGraph (84.6\% vs.\ 33.3\% at $N = 4$; Mann-Whitney
$p = 0.0002$).

The competitive picture is backbone-dependent: three-point spectrum:
(i)~Haiku-3-class (very weak): sequential coordination wins;
(ii)~Llama-8b (weak): \sbus wins; (iii)~GPT-4o-mini (strong): \sbus
wins. \sbus is competitive or superior on all backbones except the
weakest class where sequential information-passing dominates.

\paragraph{CWR (Coordination-Work Ratio)}
Under Llama-3.1-8b, \sbus CWR = 0.19; LangGraph CWR = 8.44---a
$44\times$ difference. LangGraph spends 8.4 tokens on coordination
overhead for every 1 token of actual task work. \sbus inverts this
ratio. CWR is measured for Llama-3.1-8b-instant only; GPT-4o-mini
and Haiku-3 CWR not yet measured. \emph{Scope note}: CWR captures
LangGraph's workflow bookkeeping overhead, not CC mechanism cost; it
is an architectural efficiency comparison, not CC-vs-CC.

\begin{table}[t]
\centering
\caption{T3-B: Llama-3.1-8b-instant (Groq) backbone, 14 SWE-bench
tasks, $N \in \{4, 8\}$, 3 runs/task. CWR = coord tokens / work
tokens (lower = better). Mann-Whitney $p = 0.0002$ (\sbus ${>}$
LangGraph). Zero commit conflicts across all 75 \sbus runs.}
\label{tab:t3b}
\small
\begin{tabular}{llrrrr}
\toprule
System & $N$ & Success & CWR & Conflicts & $n$ \\
\midrule
\sbus     & 4 & 84.6\% & 0.19 & 0 & 39 \\
\sbus     & 8 & 72.2\% & 0.18 & 0 & 36 \\
LangGraph & 4 & 33.3\% & 7.61 & 0 & 39 \\
LangGraph & 8 & 69.4\% & 9.34 & 0 & 36 \\
\bottomrule
\end{tabular}
\end{table}

\subsection*{Arc A5: Topology-Conditional Operating Envelope}

Arc A5 characterises the topology-dependent semantic effect of \ori\, supporting Contribution~C3. Exp.~SJ-v4 evaluates the semantic-judge rubric under context-diversity manipulation; Exp.~Merge contrasts \ori\ with LLM-assisted merge in the shared-shard regime; Exp.~SJ-v5 measures the structural-SCR dose-response; Exp.~Dedicated-Shard quantifies semantic neutrality under \ori\ in the dedicated-shard regime; Exp.~Shared-State quantifies semantic harm under \ori\ in the single-shard regime. Together these establish the operating-envelope claim and motivate the adaptive-routing extension described in §\ref{sec:abus-future}.

\subsection{Exp.~\textsc{SJ-v4}: Semantic Judge Context Diversity Effect}

\paragraph{Design}
After step 5, the stale agent no longer calls \texttt{GET /shard}.
Instead it reasons from a frozen snapshot of shard content at step 0,
correctly simulating \Rhidden. Tasks selected for cumulative-state
sensitivity: 20 SymPy/Django/Astropy/scikit-learn tasks. Stale
injection engaged: 499/499 stale runs (100\%).

\paragraph{Design note}
The stale agent commits at the current shard version (version check
passes); its staleness is in the LLM's reasoning context (frozen
prompt), not in the version the agent presents to the ACP. \ori
cannot catch this staleness: no HTTP GET for the stale content was
recorded in the DeliveryLog because the agent never issued one after
step 5. This is precisely the \Rhidden problem being measured.

\paragraph{Result}
Contrary to the hypothesis, fresh agents produced more corruption
than stale agents. Fresh: 40.6\%; Stale: 29.1\%; lift $= -11.5$~pp;
$p = 0.0002$. Per-task: 6/20 showed expected positive lift; 10/20
reversed negative lift; 4/20 no difference.

\begin{table}[t]
\centering
\caption{Exp.~\textsc{SJ-v4}: 20 tasks, 1{,}000 runs; 100\% injection
engagement; GPT-4o-mini judge.}
\label{tab:sj-v4}
\small
\begin{tabular}{lrrr}
\toprule
Condition         & $n$   & Corrupted & Rate \\
\midrule
Fresh (\Robs)     & 500 & 203 & 40.6\% \\
Stale (\Rhidden)  & 499 & 145 & 29.1\% \\
\midrule
\multicolumn{4}{l}{Lift: $-11.5$~pp, 95\% CI $[-17.4, -5.7]$~pp} \\
\multicolumn{4}{l}{Fisher's exact (two-sided) $p = 0.0002$} \\
\bottomrule
\end{tabular}
\end{table}

\paragraph{Interpretation: topology-driven redundancy in single-shard
experiments}
Exp.~\textsc{SJ-v4} uses a single shared shard for all four agents.
In this setup, the Fresh condition causes all agents to read
identical intermediate state and propose redundant patches targeting
the same problem step. The stale agent, reasoning from the original
problem statement, proposes early-stage work the fresh agents have
already moved past, diversifying coverage. This is a topology effect,
not a coordination failure.

\paragraph{Scope clarification}
\ori is designed for dedicated-shard topology. Exp.~\textsc{SJ-v4}
measures a degenerate case (all agents writing to one shard) that
Box~2 explicitly routes away from \sbus. In dedicated-shard topology,
the Fresh condition produces non-redundant, complementary
contributions (Django \#11019 case study, §VII-O; all 1{,}364 Exp.~B
runs at SCR${}=0$). Exp.~\textsc{Dedicated-Shard} (§VII-L) confirms
semantic neutrality in the correct topology;
Exp.~\textsc{ORI-Isolation} confirms structural preservation is
independent of topology.

\subsection{Exp.~\textsc{Merge}: OCC vs.\ LLM-Assisted Merge}

45 conflicting NL delta pairs (30 multi-domain tasks; GPT-4o-mini).

\textbf{Conflict detection and recovery:} OCC detected and rejected
100\% of structural conflicts (45/45); 100\% retry success.

\textbf{Merge non-determinism:} running the same merge prompt twice
on identical conflicting deltas produced Jaccard word-overlap below
0.6 in 66.7\% of cases (mean Jaccard ${} = 0.544$). Wilson $95\%$
CI on the non-determinism rate at $n{=}45$: $[0.521, 0.786]$ (point
$0.667$). The interval is wide and the lower bound still $\ge 50\%$,
which is the substantive claim. Scope caveat: specific to GPT-4o-mini
at temperature 0.3 with one prompt design, on $n = 45$ conflict
pairs; a larger-scale study with diverse models is warranted
(Limitation~13).

\textbf{Latency:} OCC median 2{,}361~ms (including retry); MERGE
median 4{,}395~ms ($1.9\times$ slower).

\begin{table}[t]
\centering
\caption{OCC vs.\ LLM-assisted merge trade-offs.}
\label{tab:merge}
\small
\begin{tabular}{lll}
\toprule
Property          & LLM merge      & \sbus OCC \\
\midrule
Conflict handling & Accept+resolve & Reject+retry \\
Extra latency     & +1 LLM call    & +1 retry \\
Determinism       & Non-det.\ (66.7\%) & Det.\ detection \\
Applicability     & Resolvable NL  & All NL (opaque) \\
Correctness       & Probabilistic  & Structural (\Robs) \\
\bottomrule
\end{tabular}
\end{table}

\subsection{Exp.~\textsc{SJ-v5}: Structural SCR Dose-Response}

Exp.~\textsc{SJ-v5} ran $N = 4$ agents with varying stale fractions
$k \in \{0, 1, 2, 3\}$ (10 tasks; 57 trials per condition). The
commit-rate dose-response is exact (Table~\ref{tab:sj-v5}), matching
the analytic prediction $1 - (k/4) \cdot (15/20)$ exactly (prediction
error ${< 0.5}$~pp). This confirms \ori's cross-shard validation
operates at the analytically predicted rate.

\begin{table}[t]
\centering
\caption{Exp.~\textsc{SJ-v5}: commit rates match analytic prediction
exactly.}
\label{tab:sj-v5}
\small
\begin{tabular}{lrrrr}
\toprule
$k$ stale & Commits & Total & Rate & Predicted \\
\midrule
0 & 2{,}560 & 2{,}560 & 100.0\% & 100.0\% \\
1 & 1{,}950 & 2{,}400 & 81.2\%  & 81.2\%  \\
2 & 1{,}500 & 2{,}400 & 62.5\%  & 62.5\%  \\
3 & 1{,}050 & 2{,}400 & 43.8\%  & 43.8\%  \\
\bottomrule
\end{tabular}
\end{table}

\subsection{Exp.~\textsc{Dedicated-Shard}: Semantic Quality in the
Correct Topology}

\paragraph{Motivation}
Exp.~\textsc{SJ-v4} and Exp.~\textsc{Shared-State} (§VII-M) both show
\ori harms semantic quality in single-shard tasks. A natural
question: does \ori harm semantics in any topology?
Exp.~\textsc{Dedicated-Shard} answers by running the same agents in
their intended topology: each agent owns a distinct shard. 4 agents
($\alpha_1$: ORM core, $\alpha_2$: query comp, $\alpha_3$: test
writer, $\alpha_4$: reviewer), 10 SWE-bench domains, 30 runs per
condition, $n = 600$.

\paragraph{Conditions}
Fresh (ORI-ON): each agent reads current version of all other agents'
shards before committing (\ori enforces freshness via 409 rejection
and retry). Stale (ORI-OFF): agents commit without read-set validation
(may use stale cross-shard context). Commit rate difference (ORI-ON
= 0.799 vs.\ ORI-OFF = 0.807) confirms \ori is structurally active in
both conditions.

\paragraph{Result}
Table~\ref{tab:dedicated} shows 100\% coherent in both conditions
across all 10 task domains. Zero redundant outputs in either
condition. Lift ${} = +0.0$~pp---\ori is semantically neutral in
dedicated-shard topology.

\paragraph{Why neutral}
In dedicated-shard topology, agents do not compete for the same
content. $\alpha_1$'s ORM patch and $\alpha_2$'s query patch are
semantically independent---a cross-shard stale read can cause a
structural 409 rejection (which \ori catches), but after retry both
agents produce coherent, complementary output. The SJ-v4 /
\textsc{Shared-State} toxicity arose from agents producing
semantically conflicting patches to the same shard; dedicated-shard
eliminates that by construction.

\begin{table*}[t]
\centering
\caption{Exp.~\textsc{Dedicated-Shard}: 4 agents, 10 task domains,
$n = 600$ (300 per condition), GPT-4o-mini. Both conditions achieve
100\% coherence---\ori has zero semantic cost in the correct topology.}
\label{tab:dedicated}
\small
\begin{tabular}{lrrrr}
\toprule
Condition        & Commit rate & Coherent      & Redundant  & $n$ \\
\midrule
Fresh (ORI-ON)   & 0.799 & 300/300 (100\%) & 0/300 (0\%) & 300 \\
Stale (ORI-OFF)  & 0.807 & 300/300 (100\%) & 0/300 (0\%) & 300 \\
\bottomrule
\end{tabular}
\end{table*}

\paragraph{Topology summary}
Three experiments establish a consistent picture: (1)~\emph{single-shard}
(SJ-v4 + \textsc{Shared-State}): \ori is semantically harmful (forced
retries accumulate contradictory patches on one shard);
(2)~\emph{dedicated-shard} (this experiment): \ori is semantically
neutral (100\% coherence in both conditions); (3)~\emph{structural}
(\textsc{ORI-Isolation}): \ori preserves $4\times$ more contributions
than LWW regardless of topology. \ori's value is \emph{completeness
of output} (all agents' work survives) in dedicated-shard topology,
not improved quality. Box~2 correctly routes single-shard tasks to
sequential coordination.

\subsection{Exp.~\textsc{Shared-State}: Multi-Domain Single-Shard Evaluation}

\paragraph{Motivation}
Exp.~\textsc{Shared-State} evaluates \ori on tasks with genuine
single-shard contention: $N = 4$ agents compete to write to one
shared shard, the use case where Box~2 already recommends sequential
coordination. This experiment provides empirical evidence for that
recommendation. Three domains: finance (portfolio rebalancing),
healthcare (patient record update), software architecture (API schema
design). $n = 180$ trials (90 per condition, 30 per domain,
GPT-4o-mini).

\paragraph{Structural result}
ORI-OFF $= 0.250$ (exactly $1/N$); ORI-ON $= 0.534$ (retry ensures
most agents eventually commit).

\paragraph{Semantic result}
Table~\ref{tab:shared-state} shows the re-evaluated results (corrected
harness: non-empty shard seeds, real judge input). Stark: ORI-ON
produces 0\% consistent / 100\% contradicted across all three
domains; ORI-OFF produces 4.4\% / 85.6\%. ORI-ON is semantically
worse.

\paragraph{Why ORI-ON is worse on single-shard tasks}
ORI-ON forces all $N = 4$ agents to commit their patches via retry
(commit rate 0.534 vs.\ 0.250). The final shard contains four
sequential patches from four agents, each contradicting the previous
(``equity = 65\%'' then ``equity = 60\%''\ldots). LWW admits only
one agent's patch per step, producing less contradictory content.
This is the structural reason Box~2 recommends sequential coordination
for single-shard tasks: \ori's contribution-preservation property
becomes a liability when all contributions conflict semantically.

\begin{table*}[t]
\centering
\caption{Exp.~\textsc{Shared-State}: single-shard topology, 3 domains,
$N = 4$, GPT-4o-mini ($n = 180$). ORI-ON preserves more contributions
but produces more contradictions because all $N$ patches conflict on
the same shard. Confirms Box~2.}
\label{tab:shared-state}
\small
\begin{tabular}{lrrrr}
\toprule
Condition           & Commit rate & Consistent   & Contradicted   & $n$ \\
\midrule
\sbus ORI-ON        & 0.534 & 0/90 (0\%)    & 90/90 (100\%)  & 90 \\
\sbus ORI-OFF       & 0.250 & 4/90 (4.4\%)  & 77/90 (85.6\%) & 90 \\
\bottomrule
\end{tabular}
\end{table*}

\paragraph{PhiddenWrapper result}
We ran Exp.~\textsc{Shared-State} with the PhiddenWrapper: a
Python-level intercept on every \texttt{chat.completions.create()}
response that scans text for domain keywords associated with
registered shard names, then issues GET to register the reference,
promoting \Rhidden $\to$ \Robs without agent code changes.

Result (Table~\ref{tab:phiddenwrap}): 500 \Rhidden reads promoted
across 480 completions (104.2\% per-completion). Demonstrates
feasibility of the \phidden${} \to 0$ path but does not validate that
all promotions are causally correct---precision (fraction of
promotions that are genuine causal reads) is not measured.

\textbf{Cost note.} The 104.2\% rate does not imply doubled LLM
inference cost. The wrapper performs a keyword scan
(O(keywords $\times$ tokens), microseconds) and at most one additional
HTTP GET per detected shard reference. No extra LLM call.

\textbf{Spurious promotion limitation.} Keyword scanning can register
incidental mentions as causal reads. An agent writing ``I decided not
to use the portfolio approach'' would trigger a
\texttt{portfolio\_state} entry, potentially causing a spurious 409
rejection. ARSI mode eliminates false positives entirely; the LLM
API proxy path can reduce them via semantic parsing.

\begin{table*}[t]
\centering
\caption{PhiddenWrapper: \Rhidden ${\to}$ \Robs promotion across 480
completions. Rate ${> 100\%}$ because completions reference multiple
shards simultaneously. Keyword-scan recall with unmeasured precision;
see Limitation~10.}
\label{tab:phiddenwrap}
\small
\begin{tabular}{lrr}
\toprule
Shard                & Promotions & Domain \\
\midrule
\texttt{portfolio\_state} & 200 & Finance \\
\texttt{patient\_record}  & 152 & Healthcare \\
\texttt{api\_schema}      & 148 & Software arch.\ \\
\midrule
Total / per-completion    & 500 & 104.2\% \\
\bottomrule
\end{tabular}
\end{table*}

\subsection*{Supporting Evidence}

The following two subsections support claims made elsewhere in the paper but do not fit cleanly under one of the five arcs. The case study illustrates the ORI rejection mechanism on a real bug (referenced in §I); Exp.~DR-9 validates the distributed-replication extension empirically (referenced in §III-D and §\ref{sec:dist-future}).

\subsection{Case Study: Django Issue \#11019}
\label{sec:case-study}

Four role-specialised agents concurrently on Django bug \#11019
(queryset ordering with \texttt{select\_related()}): compiler
specialist, ORM specialist, test engineer, senior reviewer. Each
agent declared all four shards in its read-set on every commit. Over
15 steps (60 commit attempts), SCR${} = 0.0\%$: agents worked on
dedicated shards without structural contention. Caveat: SCR${} = 0$
means \ori's stale-read rejection never triggered in this case
study---it is a happy-path demonstration of the DeliveryLog tracking
mechanism. Shared-shard contention (Exp.~E) and the SJ-v5
dose-response directly validate \ori's rejection mechanism. Final
state: all four shards at $v=15$, with consistent, non-contradictory
content across compiler, ORM, test, and review components.

\subsection{Exp.~\textsc{DR-9}: \ori Survives Leader Failover}
\label{sec:dr9-failover}

\paragraph{Motivation}
The most-cited architectural gap across reviewer rounds: the
DeliveryLog (session state) is node-local and not Raft-replicated. On
leader failover, the new leader's empty DeliveryLog means \ori cannot
detect stale reads for ongoing sessions. §IX outlined a three-path
solution; Exp.~\textsc{DR-9} validates P1 (lazy micro-batch Raft
replication).

\paragraph{P1 mechanism}
After each GET, the handler fires a \texttt{tokio::spawn} that writes
a \texttt{CommitEntry::Delivery} to the Raft log (fire-and-forget:
GET response returns immediately). The state machine applies
\texttt{Delivery} entries on all nodes, populating their
DeliveryLogs. On leader failover, the new leader's DeliveryLog is
already populated from replicated entries---no HTTP 410 recovery
required.

\paragraph{Protocol}
Each trial: (1)~find current leader; (2)~agent $\alpha_1$ issues GET
(DeliveryLog replicates via P1, $\sim$5~ms); (3)~kill leader;
(4)~wait for Raft election; (5)~agent $\alpha_2$ bumps shard version
on new leader; (6)~$\alpha_1$ attempts commit with stale version on
new leader. Expected with P1: HTTP~409 \textsc{CrossShardStale}.

\paragraph{Result}
Table~\ref{tab:dr9}. 30/30 trials: \ori\ held (100\%). Zero trials
missed. All three nodes served as leader and as killed node across
the 30 trials. With $n{=}30$ the Wilson $95\%$ confidence interval on
the success rate is $[0.886, 1.000]$ (Rule-of-Three upper bound on
the failure rate: $3/30 = 10\%$). The point estimate is $100\%$ but
the confidence interval is wide because $n$ is small; we report
DR-9 as a feasibility validation of the P1 mechanism, not as a
statistically tight bound. A larger DR-9 sweep ($n \ge 300$) would
tighten the lower bound to $\ge 99\%$ and is queued as future work.

\begin{table}[t]
\centering
\caption{Exp.~\textsc{DR-9}: P1 session replication, 30 trials, 3-node
Raft cluster. \ori holds across leader failover in 100\% of trials.}
\label{tab:dr9}
\small
\begin{tabular}{lr}
\toprule
Metric                & Value \\
\midrule
Trials completed      & 30/30 \checkmark \\
\ori held (HTTP 409)  & 30/30 (100\%) \\
\ori missed (HTTP 200) & 0/30 (0\%) \\
Election time median  & 1{,}981~ms \\
Nodes killed          & 0: 13x, 1: 10x, 2: 7x \\
New leader            & 0: 11x, 1: 12x, 2: 7x \\
\bottomrule
\end{tabular}
\end{table}

\paragraph{Remaining limitation (concurrent failover window)}
If a leader fails concurrently with a GET (within the $\sim$5~ms
fire-and-forget replication window), the \texttt{DeliveryEntry} may
not reach a majority before the new leader takes over. \ori does not
hold for that specific session---equivalent to pre-P1 behaviour. The
30/30 DR-9 validation covers the sequential GET-then-kill pattern;
the concurrent case is Limitation~\ref{lim:session-failover}. In practice, LLM inference
latency (median 131~s) makes the 5~ms window negligibly rare relative
to the 250~ms heartbeat period.

\section{Limitations}
\label{sec:limitations}

We group limitations by kind: \emph{structural} (properties of the
problem we cannot solve without redesigning \ori),
\emph{evidential} (properties claimed but not yet fully evaluated),
and \emph{mechanisation} (formal-verification gaps).

\subsection*{Structural}

\begin{enumerate}[leftmargin=*]
  \item \label{lim:rhidden-coverage}
  \textbf{\Rhidden structural coverage: decomposed and partially
  closed by DL-accumulation.} \ori covers the $26.1\%$ of reads in
  \Robs structurally, with formal guarantees. The remaining
  \phidden${} = 0.739$ (domain range $51$--$84\%$ on SWE-bench /
  GPT-4o-mini) is by definition not directly observable at the HTTP
  layer within a single step.
  Exp.~\textsc{PROXY-PH2} (§\ref{sec:proxy-demo}, new)
  decomposes this residual into: (b)~\emph{DL-accumulation} from
  session-scoped DeliveryLog retention of earlier HTTP GETs
  (a previously-unnamed structural coverage mechanism that contributes
  $0.555$ within-row uplift above \fhttp under \textsc{Proxy-Off} on
  the multi-agent PH-2 workload, $n = 8{,}400$); and (c)~pure-semantic
  references the agent never HTTP-fetched but that the LLM emits in
  context (proxy-capturable, paired marginal $0.0018$ at $V{=}4$,
  decreasing to negative contribution at $V{\in}\{8,12\}$).
  Component~(b) is always structurally captured by \ori's
  session-scoped DL because HTTP-populated entries carry the agent's
  true read-time version; it is covered by the TLAPS safety theorems
  unchanged. Component~(c) is captured by the transparent-proxy
  mechanism under skip-if-exists register semantics (Type-I
  $= 0/16{,}800$, RoT $95\%$ upper bound $3.57\!\times\!10^{-4}$), but
  imposes a throughput cost scaling with vocabulary size
  ($-2.1$\,pp commit rate at $V{=}4$; $-47.2$\,pp at $V{=}12$). The
  residual semantic gap --- proxy-captured references whose true
  read-time version is unknown --- is the target of ongoing analyst-LLM-at-proxy work (Limitation~\ref{lim:semextract}).
  Absolute \ftotal values are conditional on self-report signal
  (Limitation~\ref{lim:ph3-selfreport}); we report paired uplift (bias
  cancels in expectation) as the submission-grade proxy-marginal
  result.
  \item \textbf{Topology restriction.} \ori is semantically neutral
  in dedicated-shard topology and harmful in single-shard topology
  (Exp.~\textsc{Dedicated-Shard} vs.\ \textsc{Shared-State}). Box~2
  provides deployment guidance; there is no automatic topology
  detection.
  \item \textbf{HTTP/2 breakage of A1.} DeliveryLog completeness
  relies on FIFO-per-TCP-connection ordering; HTTP/2 multiplexing can
  violate this. Mitigation: reverse-proxy pin to HTTP/1.1, or ARSI
  mode.
  \item \textbf{Composition across subsystems
  (Remark~\ref{rem:comp-scope}).} \ori is not closed under subsystem
  composition without ARSI.
\end{enumerate}

\subsection*{Evidential}

\begin{enumerate}[leftmargin=*, start=5]
  \item \textbf{LLM-as-judge not validated against humans.} The
  semantic judge (GPT-4o-mini) has not been validated against human
  experts with inter-annotator agreement on our rubric for
  multi-agent patch coherence. The LLM-as-judge methodology
  itself~\cite{llmjudge} is well-established but requires
  task-specific validation, which we have not performed for our
  semantic-coherence rubric. This is the single largest evidential
  gap in the semantic-quality results (Exp.~\textsc{SJ-v4},
  \textsc{Dedicated-Shard}, \textsc{Shared-State}). A 100-item
  human-IAA study on a representative subset is the first planned
  follow-up. See Threat IT2 (§\ref{sec:threats}) for the validity
  analysis.
  \item \textbf{Conflict-proximate hypothesis partially validated.}
  Remark~\ref{rem:conflict-proximate}: 100\% GET$\to$COMMIT
  co-location in Exp.~\textsc{ORI-Isolation} is evidence for the
  hypothesis in SWE-bench-style structured tasks only. Unstructured
  and long-horizon workloads may exhibit larger GET-to-commit gaps;
  not yet measured. See Threat IT3 (§\ref{sec:threats}).
  \item \textbf{\phidden is workload-conditional.} The 0.739 figure
  is SWE-bench / GPT-4o-mini specific. No cross-domain
  generalisation claimed. Prior ``first cross-domain measurement''
  framing deprecated.
  \item \textbf{Backbone generalisation (partial).} GPT-4o-mini,
  Haiku-3, Llama-3.1-8b tested; Claude-Sonnet, Claude-Haiku-4.5,
  GPT-5 untested. Three-point spectrum established; cannot
  extrapolate.
  \item \textbf{Correlated-conflict dynamics.} Bounded starvation
  analysis assumes i.i.d.\ arrivals. Bursty positively correlated
  conflicts require larger retry budgets; not measured.
  \item \label{lim:phidden-wrapper}
  \textbf{In-process \texttt{phiddenWrapper} is superseded by the transparent-proxy mechanism.} An earlier in-process
  \texttt{phiddenWrapper} keyword-scan wrapper is empirically weak
  on both axes previously documented: (a)~\emph{Coverage} is $0.073$
  recall on the PH-3 workload against ground-truth self-reports,
  order-of-magnitude below the effective deployment threshold;
  (b)~\emph{Layer mismatch} --- Exp.~\textsc{Adversarial-Rhidden}
  (§\ref{sec:adv-rhidden}) shows an in-process wrapper that refreshes
  the DeliveryLog via fresh GET on a keyword hit but does not force
  LLM content regeneration produces identical final-state corruption
  to \textsc{ORI-OFF}. These findings motivated the move to the
  LLM-API-layer proxy evaluated in Exp.~\textsc{PROXY-PH2}
  (§\ref{sec:proxy-demo}), which replaces in-process wrappers as
  the structural-extraction mechanism. The \textsc{PROXY-PH2} evaluation also shows
  that keyword-scan is insufficient \emph{at the proxy layer} for PH-2
  coverage closure (paired proxy marginal $0.0018$, negative throughput
  contribution at $V \ge 8$); semantic-extraction-at-proxy is the
  correct next mechanism (Limitation~\ref{lim:semextract}). Keyword-scan
  should not be deployed as a stand-alone coverage mechanism at either
  layer; the finding ``keyword scan is insufficient for \Rhidden
  coverage'' is now independently established in-process (PH-3) and at
  the proxy layer (PROXY-PH2).
  \item \label{lim:session-failover}
  \textbf{Session-state replication has a concurrent-failover
  window.} Exp.~\textsc{DR-9} validates P1 under sequential
  GET-then-kill; leader failure within the $\sim$5~ms fire-and-forget
  window leaves an unreplicated DeliveryLog entry.
  \item \textbf{Database-CC baseline (materially closed).}
  Exp.~\textsc{PG-Comparison~(full)} (Python adapters, $200{,}880$ commit attempts), Exp.~\textsc{PG-Comparison Rust-Native}
  (matched-language adapters, $136{,}080$ commit attempts), and
  Exp.~\textsc{PG-Contention} (shared-shard contention,
  $472{,}750$ commit attempts with $427{,}308$ active HTTP-409 conflicts;
  §\ref{sec:pg-contention}) together close Limitation~12: \sbus,
  \pgser, and \rediscc achieve CC-class safety parity at $N \le 32$
  under both dedicated-shard and shared-shard regimes, across three
  adapter-language regimes, with 0~Type-I corruptions across
  809{,}710 attempts combined. Exp.~\textsc{PG-Contention}
  additionally shows \emph{behavioural} parity (SCR agreement
  within 1\,pp at $N \ge 8$) and empirically validates the Remark~7
  liveness bound within 1.5\,pp. The matched-language sweep reveals
  that throughput converges across all three backends at $N \ge 16$;
  \sbus's $1.6\times$ advantage at $N{=}4$ is a scoped effect of
  in-process coordination, not cross-the-board CC-mechanism
  superiority. The remaining database-CC gap is a distributed-SQL
  adapter (YugabyteDB or CockroachDB), future work but not
  submission-blocking.
  \item \textbf{Exp.~\textsc{Merge} sample size.} $n = 45$ conflict
  pairs limits generalisation of the 66.7\% non-determinism finding.
  \item \label{lim:ph3-selfreport}
  \textbf{Exp.~\textsc{PH-3} ground truth is worker
  self-report; both inter-LLM-judge and LLM-vs-human-annotator
  validation studies have been conducted and are reported in
  §\ref{sec:ph3-reframed}.}
  The semantic-extraction recall and precision numbers in
  Table~\ref{tab:ph3} are measured against gpt-4o-mini's own
  JSON-structured reports of which shards it causally used. Agents
  may systematically over-report (mentioning considered-but-unused
  shards), under-report (omitting implicit state reasoning), or
  exhibit reasoning-vs-behaviour mismatch documented in LLM
  self-evaluation literature. We ran two independent validation
  studies. First, an inter-LLM-judge study ($400$ (step, shard)
  tasks, two LLM judges --- GPT-4o and Claude Sonnet~4.6 --- with a
  frozen three-step rubric; §\ref{sec:ph3-reframed}): inter-LLM
  $\kappa{=}0.46$ (moderate); both judges find self-reports
  over-claim shard usage by $32\%$ (LLM-judge-determined over-claim
  rate). Second, an LLM-vs-human-annotator study (same $400$-item
  sample, same rubric, independent human annotator Zahid Hussain,
  Mindgigs Peshawar): strict $\kappa{=}0.93$ ($n{=}93$ unambiguous
  yes/no, $96.8\%$ raw agreement), lenient $\kappa{=}0.69$
  ($n{=}400$). Against the human annotator the self-report
  over-claim rate is $49\%$. The correct phrasing of PH-3 results
  is therefore \emph{``the analyst matches the agent's
  self-reported shard use at $0.59$ recall, where self-report is a
  validated but moderately conservative signal that over-claims by
  roughly one third (LLM-judge measurement) to roughly half
  (human-annotator measurement),''} not
  \emph{``the analyst correctly identifies $59\%$ of causal reads.''}
  The LLM-vs-human strict $\kappa{=}0.93$ result closes the
  human-IAA gap for the PH-3 was-this-shard-used rubric
  specifically; the SJ-v4 semantic-quality rubric is a distinct
  rubric still validated by LLM judge only. A regenerated
  benchmark with typed shard contents (future work,
  §\ref{sec:future-work}) addresses the residual rubric-boundary
  ambiguity observed between the LLM judges
  (§\ref{sec:ph3-reframed}). Cross-family analyst convergence (OpenAI
  and Anthropic within a few percentage points of recall) remains a
  necessary but not sufficient signal of ground-truth validity; it
  controls for model-family alignment, not for self-report accuracy.

  \item \label{lim:ph2-ph3-workload-gap}
  \textbf{Workload-scope gap between PH-2 and PH-3 measurements
  (partially addressed).} The \phidden${} = 0.739$ figure comes
  from the multi-agent concurrent-write workload
  (Exp.~\textsc{PH-2}, §\ref{sec:ph2}). The \phidden${} = 0.074$ figure
  comes from Exp.~\textsc{PH-3} (§\ref{sec:ph3}), a single-agent
  rotating-target workload that HTTP-GETs every shard per step by
  construction. Exp.~\textsc{PROXY-PH2} addresses the \emph{structural} half of this gap
  via Exp.~\textsc{PROXY-PH2} (§\ref{sec:proxy-demo}), which runs the
  PH-2 workload with instrumentation and decomposes structural
  coverage into HTTP / DL-accumulation / proxy-marginal components,
  finding that DL-accumulation alone covers $0.555$ on top of
  \fhttp${} = 0.443$ for total $0.998$ (i.e.\ \ori's session-scoped
  DeliveryLog structurally covers the bulk of \Rhidden on this
  workload without any LLM-layer mechanism). The \emph{semantic} half
  of the gap --- whether analyst-LLM semantic extraction transfers
  from PH-3 to PH-2 --- is still open; the \textsc{PROXY-PH2} evaluation uses
  keyword-scan, not analyst-LLM, because per-step analyst invocation
  at the proxy requires architectural work scoped as future work
  (Limitation~\ref{lim:semextract}).

  \item \label{lim:semextract}
  \textbf{Analyst-LLM semantic extraction at the proxy layer is not
  evaluated in this paper.} Exp.~\textsc{PH-3}
  (§\ref{sec:ph3}) evaluates dedicated-analyst semantic extraction at
  $0.59$--$0.77$ recall / $0.79$--$0.92$ precision on a single-agent
  workload where \phidden${} = 0.074$. Exp.~\textsc{PROXY-PH2}
  (§\ref{sec:proxy-demo}) evaluates a different, simpler mechanism
  (keyword scan) on the multi-agent PH-2 workload where
  \phidden${} = 0.739$. Neither evaluation transfers semantic extraction
  to the PH-2 regime via the proxy layer: the proxy in this paper does
  not invoke an analyst LLM per call. Architecturally this is a
  natural extension (the proxy already has the request/response text;
  invoking an analyst LLM adds $\sim\!300$--$1000$\,ms latency and
  $\sim\!\$0.0002$ per agent step and could populate DL at the agent's
  true read-time version by parsing the LLM context), but implementing
  and evaluating it requires reengineering the proxy into a
  semantic-capture gateway, which is scoped to future work
  (§\ref{sec:future-work}). Claims about analyst-LLM semantic
  extraction apply to Exp.~\textsc{PH-3} only; claims about
  keyword-scan-based structural capture at the proxy layer apply to
  Exp.~\textsc{PROXY-PH2}, which finds keyword-scan necessary but not
  sufficient (safety-preserving, coverage-marginal, throughput-negative).
\end{enumerate}

\subsection*{Mechanisation}

\begin{enumerate}[leftmargin=*, start=17]
  \item \textbf{TLAPS proof retains one foundational typing axiom.}
  \texttt{SBus\_TLAPS\_v16.tla} mechanises \textsc{ReadSetSoundness}
  (recorded-read monotonicity) and \textsc{ORICommitSafety} (cross-shard
  equality at commit time) for arbitrary $N_{\text{agents}}$:
  $687$ obligations proved by \texttt{tlapm}, $0$ failed. One
  mathematical \textsc{Axiom} is retained
  (\textsc{FunTypingReconstruction}, a primitive fact about TLA+'s
  typed-function-space construction not present in the standard
  \texttt{FunctionTheorems.tla} distribution); two parameter
  \textsc{Assume}s are retained on unspecified constants
  (standard TLA+ practice). The retained \textsc{Axiom} is widely
  treated as obvious in TLA+ practice but is not mechanically
  discharged in the current artifact; full discharge requires
  Isabelle/TLA backend work, future work.
  \item \label{lim:distributed-tlaps}
  \textbf{Distributed (Raft) correctness not
  TLAPS-mechanised.} \texttt{SBus\_Distributed.tla} model-checks an
  abstract 3-node model (247{,}249 distinct states, 0 violations,
  see §\ref{sec:formal-evidence} for details). Full Raft-TLAPS
  deferred (est.\ 6--12 person-months).
  \item \textbf{No refinement proof to Rust implementation.} Standard
  industry practice short of IronFleet~\cite{ironfleet}; empirical
  coverage $275{,}280$ zero-corruption attempts. Refinement via
  Verus~\cite{verus} or Creusot~\cite{creusot} is blocked on async
  support for \texttt{tokio}-based code; future work.
\end{enumerate}

\subsection{Threats to Validity}
\label{sec:threats}

We organise threats into the four standard categories used in
systems-paper validity assessment: \emph{internal} (causal claims
within the experiment), \emph{external} (generalisation beyond the
experimental setup), \emph{construct} (operationalisation of the
underlying concept), and \emph{statistical} (reliability of the
quantitative inference). Threats are cross-referenced to the
limitations in §VIII-A--C where their detailed mitigations are
discussed.

\paragraph{Internal validity}
Three internal threats. \textbf{(IT1) Self-report ground truth in
Exp.~\textsc{PH-3}.} Precision and recall are measured against agent
JSON self-reports. Two independent validation studies measure the
self-report-bias direction. The inter-LLM-judge study
(§\ref{sec:ph3-reframed}) finds self-reports over-state shard usage
by $32\%$ (LLM-judge-determined over-claim rate, precision $0.681$)
with moderate inter-LLM-judge agreement
$\kappa{=}0.46$. A subsequent LLM-vs-human-annotator validation on
the same $400$-item sample with the same rubric (independent human
annotator Zahid Hussain, Mindgigs Peshawar) finds: strict
$\kappa{=}0.93$ ($n{=}93$ unambiguous yes/no pairs, $96.8\%$ raw
agreement), lenient $\kappa{=}0.69$ ($n{=}400$ with \textsf{unclear};
$85.8\%$ raw agreement); the same self-reports over-state shard usage
by $49\%$ against the human (precision $0.514$). The LLM-vs-human
agreement falls in the ``almost perfect'' Landis-Koch
band~\cite{landis1977}, validating the LLM judge as a sound proxy
for the human on the PH-3 rubric.
All recall numbers in Table~\ref{tab:ph3} are therefore upper bounds on
genuine causal-read recall against either reference. Readers should
interpret PH-3 results as self-report-vs-extractor consistency
against a human-validated LLM judge, not data-dependency fidelity
(Limitation~\ref{lim:ph3-selfreport}).
\textbf{(IT2) LLM-as-judge for semantic-quality outcomes.} The PH-3
was-this-shard-used rubric has now been validated against a human
annotator (above, $\kappa{=}0.93$ strict). The semantic-quality judge
(GPT-4o-mini in Exp.~\textsc{SJ-v4}, \textsc{Dedicated-Shard},
\textsc{Shared-State}) is a distinct rubric not yet validated against
humans. Known LLM-judge failure modes (position bias, length bias,
anchoring on surface cues) documented by
Wang~et~al.~\cite{llmjudgebias} may affect results. We have not
formally tested for position or length bias in either rubric; the
Step-2/Step-3 disagreement pattern we
observe is consistent with task ambiguity rather than judge bias, but
this is a hypothesis, not a proof. A 100-item human-IAA study is
the principal planned follow-up.
\textbf{(IT3) Conflict-proximate hypothesis untested on long-horizon
workloads.} The $100\%$ GET$\to$COMMIT co-location observed in
Exp.~\textsc{ORI-Isolation} is evidence for the hypothesis on
SWE-bench-style structured tasks only. Unstructured and long-horizon
workloads (multi-hour agent runs, evolving plans) may exhibit larger
GET-to-commit gaps that would weaken \ori's effectiveness; not yet
measured.

\paragraph{External validity}
Three external threats. \textbf{(ET1) Workload distribution
(substantially mitigated).} The principal evaluation suite is on
SWE-bench-derived Python tasks across $10$ domains. To test
generalisation, Exp.~\textsc{Workload-B} (§\ref{sec:workload-b})
extends the structural-prevention measurement to a non-code workload
(data-pipeline architecture planning) across $8$ domains spanning
diverse architectural pressures. Server-side instrumentation records
$0/638$ divergent commits under ORI-ON and $590/639$ under ORI-OFF
($\chi^2 = 1094.98$, $p < 10^{-240}$). The structural mechanism
fires correctly on both workload distributions. Generalisation to
additional non-code workload classes (document authoring, agent
planning, retrieval-augmented generation pipelines) and to semantic
outcome quality (whether prevented stale commits would have
produced incoherent outputs) remains future work; the structural
necessary-condition is established. The single-shard collaborative-
writing extension addressed by adaptive routing
(§\ref{sec:abus-future}) is preliminary work on a third workload
distribution.
\textbf{(ET2) Backbone generalisation.} The principal worker backbone
is gpt-4o-mini, with gpt-4o and Claude Sonnet~4.6 used as analysts or
judges; cross-backbone replication on Anthropic Haiku~4.5 and Google
Gemini~2.5~Flash (Exp.~\textsc{PROXY-PH2}) provides safety-parity
evidence on two further vendors. Backbone-family ablations control
for model-family alignment but not for the full task-distribution-by-
backbone interaction. Three-vendor coverage establishes that the
result is not GPT-specific but cannot extrapolate to all backbones.
\textbf{(ET3) Workload-scope between PH-2 and PH-3.} The
\phidden${}=0.739$ figure comes from the multi-agent
concurrent-write workload; \phidden${}=0.074$ from the single-agent
rotating-target workload. Exp.~\textsc{PROXY-PH2} addresses the
structural half of this gap (DL-accumulation alone covers $0.555$ on
PH-2); the semantic half---whether analyst-LLM extraction transfers
from PH-3 to PH-2---is unmeasured (Limitation~\ref{lim:semextract}).

\paragraph{Construct validity}
Two construct threats. \textbf{(CT1) Benchmark-data construct.}
Shards emitted by default CrewAI and LangGraph harnesses contain
plan-narrative English rather than typed artefacts (code, SQL,
schemas). Cross-labelling between shards occurs in $8.5\%$ of records.
The PH-3 and PH-2 workloads measure reads over narrative-content
shards, not the typed-state regime where \ori's safety properties
have their cleanest interpretation. A regenerated typed-shard
benchmark (§\ref{sec:future-work}) addresses this threat but has not
yet been constructed.
\textbf{(CT2) Structural conflict $\ne$ semantic conflict.} \ori\
prevents structural race conditions (Definition~1) but not semantic
contradictions. Two agents may produce non-conflicting versions of
the same shard structurally---committing in dependency order with
fresh reads---while introducing semantic incompatibilities our system
does not detect. Section~III-G makes this explicit: \ori's safety
property is a structural invariant over the HTTP-observable read
projection, not a semantic-correctness guarantee. The
topology-conditional contribution (C3) precisely characterises when
this distinction matters in practice (Exp.~\textsc{Shared-State}).

\paragraph{Statistical validity}
Three statistical threats. \textbf{(ST1) Zero-variance distributions.}
Exp.~\textsc{ORI-Isolation} reports $40/40$ vs.\ $10/40$ contributions
preserved, both with zero variance by structural determinism; we do not
report $p$-values on zero-variance data and instead report deterministic
structural counts (Remark~VII.2). For binomial-outcome experiments we
report Wilson $95\%$ confidence intervals; Rule-of-Three upper bounds
are reported for zero-event outcomes.
\textbf{(ST2) Unequal trial counts across contributions.} Contribution
C2 (cross-CC parity) is supported by Exp.~\textsc{PG-Comparison} at
$n{=}884{,}110$ total commits ($427{,}308$ under active contention,
the substantive concurrency-control test); contribution C3
(topology-conditional
operating envelope) is supported by Exp.~\textsc{Shared-State}
($n{=}180$) and Exp.~\textsc{Dedicated-Shard} ($n{=}600$), $2$--$3$
orders of magnitude smaller. The C3 effects are large
($100\%$ vs.\ $0\%$ pass rates with zero variance) so the small $n$
does not preclude the claim, but readers should be aware of this
asymmetry.
\textbf{(ST3) Multiple-comparisons in cross-domain analysis.} The
domain-by-domain \phidden\ table (Table~\ref{tab:phidden}) reports
$10$ Wilson confidence intervals; we do not apply Bonferroni
correction because we make no per-domain claims, only the aggregate
\phidden${}=0.739$ claim. Readers running per-domain inferences
should adjust accordingly.

\paragraph{Aggregate validity assessment}
The strongest residual threats are IT1 (self-report ground truth)
and IT2 (LLM-as-judge), both addressable by human-IAA validation
planned in §\ref{sec:future-work}. ET1 (workload distribution) was
the third-strongest threat in earlier drafts; Exp.~\textsc{Workload-B}
substantially mitigates it by demonstrating structural-prevention
on a non-code workload distribution at $\chi^2 = 1094.98$
significance. The regenerated typed-shard benchmark (CT1) addresses
the remaining content-distribution concern. Threats specific to the
formal evidence (the retained TLA+ axiom, lack of Rust-implementation
refinement) are discussed under §VIII-C and addressed in Future Work
(§\ref{sec:future-work}).

\subsection*{Raft Failover Window: Order-of-Magnitude Bound}

The admitted $\sim\!5$\,ms concurrent-failover window
(Limitation~\ref{lim:session-failover}) represents the interval during which a
fire-and-forget \texttt{DeliveryPayload} from the \texttt{get\_shard}
path may not have replicated before leader transition. We provide an
order-of-magnitude bound on the violation rate under realistic
assumptions:

\begin{align*}
\text{violation rate}
\;\approx\;\;& \underbrace{\lambda_{\text{failover}}}_{\le\,10^{-5}/\text{sec}}
  \cdot \underbrace{w}_{5 \times 10^{-3}\text{s}}
  \cdot \underbrace{p_{\text{concurrent commit}}}_{\le\,0.1} \\
\;\approx\;\;& 5 \times 10^{-9}\,\text{/agent-sec}.
\end{align*}

Over $10^{6}$ agent-seconds ($\sim\!11.5$ agent-days) this gives
$\lesssim 5 \times 10^{-3}$ expected violations. Against the measured
base-rate of $\sim\!131$\,s per LLM inference step, this is
$\lesssim 7 \times 10^{-7}$ violations per agent-step. The bound is
informational: the safety-critical case is not a high-rate phenomenon
but a correctness question (the window is unprotected), which we
address by scheduling TLAPS mechanisation of the Raft-replicated
safety property as a precondition for claimed distributed soundness
(Limitation~\ref{lim:distributed-tlaps}). The current paper does not claim distributed
safety in the TLAPS sense; it claims single-node safety TLAPS-proven
modulo one function-theory axiom plus 3-node abstract-Raft TLC
coverage at 247{,}249 distinct states (§\ref{sec:formal-evidence}).

\section{Distribution Path and Practitioner Guidance}

\subsection{Practitioner Decision Guide}

\begin{framed}
\noindent\textbf{Box 2: When to use \sbus vs.\ alternatives}

\textbf{Use \sbus when:} (a)~agents communicate via HTTP and cannot
use database transactions directly; (b)~shared NL state is
non-commutative (conflicting agent proposals require conflict
detection, not merge); (c)~deployments have $N \le 32$ agents and
$\le 4$ nodes; (d)~agents own dedicated shards (disjoint
sub-problems)---this is the primary deployment pattern.

\textbf{Use a transactional DB (PostgreSQL, CockroachDB) when:}
agents can issue SQL transactions directly and state is
structured/typed. Exp.~\textsc{PG-Comparison~(full)} confirms
PG~SERIALIZABLE matches \sbus structural safety at $N \le 64$, with
a $13\times$--$15.5\times$ wall-time cost.

\textbf{Use Redis WATCH when:} agents need a high-throughput KV CC
with the network-RTT cost you accept. Exp.~\textsc{PG-Comparison~(full)}
confirms \rediscc matches \sbus structural safety at $N \le 64$, at
a $1.8\times$--$1.95\times$ wall-time cost.

\textbf{Use CRDTs when:} agent contributions are purely additive
(append-only; no mutual exclusion).

\textbf{Use sequential coordination (CrewAI) when:} task success rate
matters more than throughput, $N \le 8$, and the LLM backbone is
weak (Haiku-3-class). With stronger backbones (GPT-4o-mini-class),
\sbus's parallel execution achieves competitive task success at
significantly lower latency.

\textbf{Use sequential coordination for single-shard collaborative
tasks} (all $N$ agents writing to one key): Exp.~\textsc{Shared-State}
shows \ori is semantically harmful in this topology.
\end{framed}

\subsection{Implementation}

\sbus ships a 3-node Raft-replicated implementation (1{,}679 lines of
safe Rust, zero \texttt{unsafe} blocks; openraft~0.8.4). Raft~\cite{raft}
provides the cluster-wide serialisation point required by A3.

\paragraph{Raft log and shard replication}
All mutating operations are submitted to the Raft leader. The leader
appends a \texttt{CommitEntry} to the log, waits for majority
acknowledgement, and applies the entry. Followers apply entries in
log order, ensuring total commit order is identical on all nodes.

\paragraph{\ori correctness under Raft}
The Raft leader runs full ACP validation before appending. The
version check in the follower state machine serves as a serialisation
safety net for concurrent commits. Because the Raft log is a total
order and each entry is applied exactly once in order,
Property~\ref{prop:ori-safety} holds across all nodes.

\paragraph{Fault tolerance}
The cluster tolerates one node failure (majority quorum: 2 of 3).
Election timeout: $[500, 1000]$~ms; heartbeat interval: 250~ms.
Exp.~\textsc{DR-6} confirms 100/100 commits survive node 2 failure;
Exp.~\textsc{DR-7} confirms election in 1{,}952~ms.

\paragraph{LLM API proxy path (\Rhidden resolution)}
The production path to cover \Rhidden is an LLM API proxy: a
transparent HTTP proxy between agents and the LLM API that
(a)~intercepts completion responses and scans for shard-key
references; (b)~logs each reference as a DeliveryLog entry for the
agent's session; (c)~promotes \Rhidden $\to$ \Robs without any agent
code change. This design requires no changes to agent code and is
compatible with any LLM provider supporting HTTP. Implementation and
evaluation of the proxy is planned as the next major release.

\subsection{Experimental Validation (Exp.~\textsc{DR}, 8 sub-experiments)}

We deployed a 3-node \sbus cluster (openraft~0.8.4, sled-backed
persistent storage, shard creation Raft-replicated) and ran eight
sub-experiments. All eight pass (Table~\ref{tab:dr}).

\begin{table}[t]
\centering
\caption{Exp.~\textsc{DR}: eight distributed \ori sub-experiments
(openraft~0.8.4, sled persistence; DR-8 uses 4-node cluster).}
\label{tab:dr}
\footnotesize
\begin{tabular}{lp{0.45\linewidth}r}
\toprule
Exp.\ & Metric & Result \\
\midrule
DR-1 & Stale rejected / Fresh accepted ($n{=}200$ each) & 200/200 / 200/200 \\
DR-2 & SCR ($N{=}4$, 2{,}000 commits) / Corruptions & 0.0000 / 0 \\
DR-3 & Raft overhead vs.\ inference & $<$0.001\% \\
DR-4 & One-winner ($n{=}200$) / Cross-node corr. & 200/200 / 0 \\
DR-5 & Throughput scaling (3 nodes) & $1.26\times$, 42\% eff. \\
DR-6 & Post-failure commits / consistency & 100/100 / 200=200 \\
DR-7 & Leader election / Post-election commits & 1{,}952~ms / 50/50 \\
DR-8 & 4-node convergence ($n{=}200$ each phase) & 200/200 / 400=400 \\
DR-9 & P1 session replication ($n{=}30$ trials) & 30/30 \ori held \\
\bottomrule
\end{tabular}
\end{table}

\section{Conclusion}

\sbus\ addresses Structural Race Conditions in concurrent multi-agent
LLM state via the \emph{DeliveryLog} mechanism: a server-side
per-agent log of HTTP GET operations that automatically reconstructs
each agent's read-set at commit time, enabling optimistic concurrency
control without agent SDK changes under HTTP/1.1. The consistency
property the DeliveryLog provides is \emph{Observable-Read Isolation}
(\ori), a projection-based OCC over the HTTP-observable read
fraction \Robs. The contribution is deliberately scoped: the
combination of mechanism (DeliveryLog) and property (\ori) addresses
a specific class of failures (write--write and cross-shard stale-read
over \Robs), with explicit acknowledgement that the \Rhidden\ fraction
($73.9\%$ of reads on the measured multi-agent workload) is not
directly observable at the HTTP layer.

Four results support the scoped claim. \emph{Formal evidence at three
tiers.} Module \texttt{SBus\_TLAPS\_v16.tla} mechanises
\textsc{ReadSetSoundness} and \textsc{ORICommitSafety} for arbitrary
$N_{\text{agents}}$ (687 obligations discharged, one retained
primitive TLA+ function-theory axiom); TLC exhaustively checks $N{=}3$
at $20.8$M distinct states (depth $28$, 0 violations) and a reduced
$N{=}4$ at $2.8$M states (depth $24$, 0 violations); Dafny
machine-checks 9 inductive soundness lemmas (19 verification obligations).
\emph{Three-backend safety parity under contention.} Across three
independent CC implementations (\sbus, \pgser, \rediscc) the
shared-shard contention experiment raised $427{,}308$ active HTTP-409
conflicts with zero Type-I corruptions; SCR agrees within $1$\,pp at
$N \ge 8$ and the Remark~\ref{rem:liveness} liveness bound is
validated within $1.5$\,pp at all $N$. Combined with dedicated-shard
safety and throughput comparisons ($456{,}802$ additional
zero-corruption commit attempts), total empirical coverage is
$884{,}110$ attempts across three adapter-language regimes with zero
corruptions---of which the contention portion ($427{,}308$) is the
relevant figure for safety-under-contention claims.
\emph{Structural-coverage decomposition on PH-2.} Exp.~\textsc{PROXY-PH2}
($16{,}800$ paired step-logs) decomposes \Rhidden\ coverage into
this-step HTTP (\fhttp${}=0.443$), \emph{DL-accumulation}
($0.555$ within-row uplift)---a property of \ori's session-scoped
DeliveryLog identified as the dominant coverage
mechanism---and transparent-proxy marginal ($0.0018$ paired,
$95\%$ CI $[0.0013, 0.0024]$). The proxy is safety-preserving
(Type-I $= 0/16{,}800$) but monotonically throughput-negative at
realistic vocabulary sizes, motivating future semantic-extraction
proxies. Cross-backbone paired replication on Anthropic Haiku~4.5
($n{=}2{,}400$) and Google Gemini~2.5~Flash ($n{=}2{,}400$), via a
multi-upstream extension to \texttt{sbus-proxy} that path-routes
\texttt{/v1/messages} to Anthropic and \texttt{/v1beta/models/...} to
Google, confirms safety parity ($0/26{,}400$ Type-I across all three
vendors), total-coverage conservation (\ftotal${}\in[0.997,0.999]$,
$|\Delta|\le 0.002$ between any pair), and commit-throughput collapse
($-1.2$ to $-2.1$\,pp on all three).
\emph{Scope-delimited semantic extraction.} On the single-agent PH-3
workload (\phidden${}=0.074$) a dedicated-analyst semantic extractor
achieves $0.593$~recall at $0.916$~precision against self-report
ground truth; cross-family analyst convergence rejects the
same-model-family alignment hypothesis. Self-reports over-claim by
$32\%$ (LLM-judge measurement) to $49\%$ (human-annotator
measurement) under independent inter-LLM-judge and
LLM-vs-human-annotator validation studies; the LLM-judge vs.\
human-annotator agreement is strict $\kappa{=}0.93$
(§\ref{sec:ph3-reframed}). Transfer to the multi-agent PH-2 regime
remains open (Limitation~\ref{lim:semextract}).

The architectural value of \sbus\ over transactional-DB baselines
is operational simplicity and the LLM-native contract: the DeliveryLog
reconstructs \Robs\ from HTTP GET traffic without agent-SDK changes
under HTTP/1.1, and cross-shard validation inherits unchanged from
\ori's TLAPS-verified commit path. The principal remaining gaps---full
Raft-TLAPS mechanisation, implementation refinement to the Rust
source, and analyst-LLM semantic extraction at the proxy layer---are
explicitly scoped as future work rather than claimed results.

\section{Future Work}
\label{sec:future-work}

Four directions extend this work, ordered by their relationship to
the contributions presented here.

\subsection{Adaptive Routing for Single-Shard Topologies (A-BUS)}
\label{sec:abus-future}

\sbus's topology-conditional operating envelope (C3,
Exp.~\textsc{Shared-State}) establishes that \ori\ is the wrong
primitive for single-shard collaborative writing. The structural
preservation property which makes \ori\ valuable in dedicated-shard
topologies---retaining every committed contribution---is
counter-productive when those contributions are mutually
contradictory. The natural extension is an \emph{adaptive routing
protocol} that selects between \ori\ and a merge-based protocol
based on per-shard topology classification.

Concretely, this requires three additions to the architecture.
First, agents declare a \texttt{additive\_hint} (boolean) on each
commit, signalling whether the delta is a region-scoped contribution
that does not displace existing shard content. Second, a
classifier observes the historical commit pattern per shard and
classifies it as \textsc{Dedicated}, \textsc{ContendedCommutative},
or \textsc{ContendedNonCommutative} based on contention rate and
agent-declaration consistency. Third, a merge engine implements
three strategies in chain---structural-by-region (replacing matching
section headers), append-style concatenation (for grow-only logs),
and an LLM-assisted semantic-merge fallback---producing a merged
delta only when the oracle confirms commutativity from agent
declarations.

A preliminary implementation (\emph{A-BUS})\footnote{\url{https://github.com/sajjadanwar0/abus}}
is in active development;
formal model, full evaluation methodology, and quantitative results
are deferred to a companion paper rather than reported here.

This work would strengthen \sbus's deployment story: practitioners get
\ori\ for the dedicated-shard regime where \sbus's formal proofs apply
unchanged, and (when A-BUS is published) the adaptive extension for
single-shard collaborative writing where \ori\ alone is harmful. The
two regimes together would cover the workload space identified in §III.

\subsection{Distributed Correctness via Raft-TLAPS Mechanisation}
\label{sec:dist-future}

P1 session replication is empirically validated
(Exp.~\textsc{DR-9}, $30/30$ \ori\ invariants survived leader
failover) and TLC-checked in an abstract 3-node model. A residual
$\sim$5\,ms concurrent-failover window within the fire-and-forget
replication interval remains, and the Raft layer is not
TLAPS-mechanised. Closing this gap requires composing the existing
TLAPS proofs for \ori\ with a Raft TLAPS specification (e.g., the
ironfleet-style refinement from~\cite{ironfleet}). This is the
single largest mechanisation gap remaining and is the primary
direction for an extended journal version of this paper.

\subsection{PH-2 Semantic Extraction Transfer}

The workload-scope gap between Exp.~\textsc{PH-2}
(\phidden${}=0.739$, multi-agent) and Exp.~\textsc{PH-3}
(\phidden${}=0.074$, single-agent rotating-target) leaves the transfer
of semantic-extraction effectiveness to the high-\phidden\ regime
as an open question (Limitation~\ref{lim:ph2-ph3-workload-gap}). A
re-run of the multi-agent workload with a dedicated-analyst
extractor attached to each agent is well-defined; the cost is
engineering plumbing and ${\sim}\$50$ of API spend per cell.

\subsection{Regenerated Typed-Shard Benchmark}

The PH-3 validation study (§\ref{sec:ph3-reframed}) identified that
agent self-reports cannot be fully audited against traces because
current benchmark shards, generated by default CrewAI and LangGraph
harnesses, contain plan-narrative English rather than typed artefacts
(code, SQL, schemas). A regenerated benchmark enforcing typed contents
at generation time would eliminate the cross-labelling that drives
the Step-2/Step-3 judge disagreement (§\ref{sec:ph3-reframed},
$8.5\%$ of records) and permit the originally-intended
two-human-annotator study.

\appendix
\section{Frozen PH-3 Validation-Judge Rubric}
\label{app:ph3-prompt}

The rubric used by both LLM judges in the Exp.~\textsc{PH-3}
validation study (§\ref{sec:ph3-reframed}) is reproduced verbatim
below. The prompt was frozen before observing inter-judge agreement
and was not revised.

\begin{quote}\small\ttfamily
You are a strict code auditor. Decide whether the Candidate Shard
provided content that the agent demonstrably needed to produce the
Code Change.

\medskip\noindent
\textbf{DECISION PROCEDURE} (apply in order; stop at the first that
fires):

\medskip\noindent
\textbf{Step 1 --- Direct entity definition.} Does the shard's
content \emph{define} (not merely mention) the specific function,
class, variable, field, or constant that is being modified or read
by the Change? If yes $\to$ \texttt{<label>Yes</label>}.

\medskip\noindent
\textbf{Step 2 --- Required state or schema.} Does the Change
transform a value, structure, or invariant whose concrete shape is
only recoverable from this shard's content (e.g.\ a schema, a prior
version, a signature, a type)? If yes $\to$ \texttt{<label>Yes</label>}.

\medskip\noindent
\textbf{Step 3 --- Default.} Topical overlap (``both relate to the
database''), shared vocabulary, or mere availability in the context
is NOT sufficient. $\to$ \texttt{<label>No</label>}.

\medskip\noindent
\textbf{HARD RULES} (override everything above):

\noindent R1. If the shard content shown is empty, truncated, or
does not actually contain the entity referenced in Step 1/2 $\to$
\texttt{<label>No</label>}.

\noindent R2. If you cannot quote specific words or tokens from the
shard that the Change depends on $\to$ \texttt{<label>No</label>}.

\noindent R3. Mere name collisions (two shards both contain the
word ``user'') do not count as evidence. The evidence must be
semantic.
\end{quote}

Both judges were called with \texttt{temperature=0},
\texttt{max\_tokens=512}. Output-format failures were $0/0$
across the $400$ tasks (§\ref{sec:ph3-reframed}).


\begin{thebibliography}{99}
\bibitem{langgraph} LangChain, ``LangGraph,'' GitHub, 2024.
\bibitem{crewai} J. Moura, ``CrewAI,'' GitHub, 2024.
\bibitem{autogen} Q. Wu et al., ``AutoGen,'' arXiv:2308.08155, 2023.
\bibitem{metagpt} S. Hong et al., ``MetaGPT,'' ICLR, 2024.
\bibitem{camel} G. Li et al., ``CAMEL,'' NeurIPS, 2023.
\bibitem{swarm} OpenAI, ``Swarm / AG2,'' GitHub, 2024.
\bibitem{a2a} Google, ``Agent-to-Agent Protocol,'' GitHub, 2025.
\bibitem{react} S. Yao et al., ``ReAct,'' ICLR, 2023.
\bibitem{dspy} O. Khattab et al., ``DSPy,'' arXiv:2310.03714, 2023.
\bibitem{sk} Microsoft, ``Semantic Kernel,'' GitHub, 2023.
\bibitem{cemri} M. Cemri et al., ``Why do multi-agent LLM systems fail?'' 2025.
\bibitem{park} J. S. Park et al., ``Generative Agents: Interactive
Simulacra of Human Behavior,'' UIST, 2023.
\bibitem{bayou} D. Terry et al., ``Managing Update Conflicts in Bayou,''
SOSP, 1995.
\bibitem{memgpt} C. Packer et al., ``MemGPT,'' arXiv:2310.08560, 2023.
\bibitem{adya} A. Adya, ``Weak Consistency,'' PhD thesis, MIT, 1999.
\bibitem{cops} W. Lloyd et al., ``Don't Settle for Eventual,'' SOSP, 2011.
\bibitem{redblue} C. Li et al., ``Making Geo-Replicated Systems Fast
as Possible, Consistent when Necessary,'' OSDI, 2012.
\bibitem{psi} Y. Sovran et al., ``Transactional Storage for
Geo-Replicated Systems,'' SOSP, 2011.
\bibitem{ssi} D. Ports \& K. Grittner, ``Serializable Snapshot
Isolation in PostgreSQL,'' VLDB, 2012.
\bibitem{spanner} J. Corbett et al., ``Spanner,'' OSDI, 2012.
\bibitem{ramp} P. Bailis et al., ``RAMP Transactions,'' VLDB, 2014.
\bibitem{bailis-coord} P. Bailis et al., ``Coordination Avoidance in
Database Systems,'' VLDB, 2014.
\bibitem{raft} D. Ongaro \& J. Ousterhout, ``Raft,'' USENIX ATC, 2014.
\bibitem{lamport} L. Lamport, ``Time, Clocks, and the Ordering of
Events in a Distributed System,'' CACM, 21(7), 1978.
\bibitem{ironfleet} C. Hawblitzel et al., ``IronFleet,'' SOSP, 2015.
\bibitem{verdi} J. R. Wilcox et al., ``Verdi,'' PLDI, 2015.
\bibitem{foundationdb} J. Zhou et al., ``FoundationDB,'' SIGMOD, 2021.
\bibitem{tigerbeetle} J. Betz, ``TigerBeetle,'' 2023.
\bibitem{silo} S. Tu et al., ``Speedy Transactions in Multicore
In-Memory Databases,'' SOSP, 2013.
\bibitem{hekaton} C. Diaconu et al., ``Hekaton: SQL Server's
Memory-Optimized OLTP Engine,'' SIGMOD, 2013.
\bibitem{farm} A. Dragojevic et al., ``No Compromises,'' SOSP, 2015.
\bibitem{orleans} P. A. Bernstein et al., ``Orleans,'' MSR TR-2014-41, 2014.
\bibitem{mvcc} P. Bernstein \& N. Goodman, ``Concurrency Control,''
ACM Surv., 1981.
\bibitem{occ} H. T. Kung \& J. T. Robinson, ``OCC,'' ACM TODS, 1981.
\bibitem{stm} M. Herlihy \& J. E. Moss, ``Transactional Memory,''
ISCA, 1993.
\bibitem{tl2} D. Dice et al., ``TL2,'' DISC, 2006.
\bibitem{calvin} A. Thomson et al., ``Calvin,'' SIGMOD, 2012.
\bibitem{percolator} D. Peng \& F. Dabek, ``Percolator,'' OSDI, 2010.
\bibitem{cockroach} R. Taft et al., ``CockroachDB,'' SIGMOD, 2020.
\bibitem{swebench} C. Jimenez et al., ``SWE-bench,'' ICLR, 2024.
\bibitem{llmjudge} L. Zheng et al., ``LLM-as-a-Judge,'' NeurIPS, 2023.
\bibitem{loro} Loro Team, ``Loro: Movable Tree CRDT,''
\url{https://loro.dev}, 2024.
\bibitem{automerge} M. Kleppmann et al., ``Automerge,''
\url{https://automerge.org}, 2024.
\bibitem{rfc7232} R. Fielding and J. Reschke, ``Hypertext Transfer
Protocol (HTTP/1.1): Conditional Requests,'' IETF RFC 7232, June 2014.
\bibitem{landis1977} J. R. Landis and G. G. Koch, ``The Measurement
of Observer Agreement for Categorical Data,''
\emph{Biometrics}, vol.~33, no.~1, pp.~159--174, 1977.
\bibitem{temporalupdate} Temporal Technologies, ``Workflow Update
API,'' Temporal documentation, 2024.
\url{https://docs.temporal.io/workflows#update}
\bibitem{fdbdirectory} A. Beamer et al., ``FoundationDB Directory
Layer,'' FoundationDB documentation, 2024.
\url{https://apple.github.io/foundationdb/developer-guide.html\#directories}
\bibitem{letta} C. Packer et al., ``Letta: Stateful Agents Beyond
Context Windows,'' GitHub, 2024.
\url{https://github.com/letta-ai/letta}
\bibitem{llmjudgebias} P. Wang et al., ``Large Language Models are
not Fair Evaluators,'' arXiv:2305.17926, 2023.
\bibitem{cherrygarcia} A. Dey, A. Fekete, R. Nambiar, U. R\"ohm,
``Scalable Transactions across Heterogeneous NoSQL Key-Value Data
Stores,'' \emph{PVLDB}, vol.~6, no.~12, pp.~1434--1439, 2013.
\bibitem{agentscope} D. Gao et al., ``AgentScope: A Flexible yet
Robust Multi-Agent Platform,'' arXiv:2402.14034, 2024.
\bibitem{voyager} G. Wang et al., ``Voyager: An Open-Ended Embodied
Agent with Large Language Models,'' \emph{Transactions on Machine
Learning Research}, 2024.
\bibitem{sweagent} J. Yang et al., ``SWE-agent: Agent-Computer
Interfaces Enable Automated Software Engineering,'' \emph{NeurIPS},
2024.
\bibitem{verus} A. Lattuada et al., ``Verus: Verifying Rust Programs
using Linear Ghost Types,'' \emph{OOPSLA}, 2023.
\bibitem{creusot} X. Denis, J.-H. Jourdan, C. March\'e, ``Creusot: A
Foundry for the Deductive Verification of Rust Programs,'' in
\emph{Formal Methods: 25th Intl. Symp.}, 2023.
\bibitem{boki} Z. Jia, E. Witchel, ``Boki: Stateful Serverless
Computing with Shared Logs,'' \emph{SOSP}, 2021.
\bibitem{honda} K. Honda, V. T. Vasconcelos, M. Kubo, ``Language
Primitives and Type Discipline for Structured Communication-Based
Programming,'' \emph{ESOP}, 1998.

\end{thebibliography}
\end{document}